\documentclass[11pt,a4paper]{article}

\usepackage[utf8]{inputenc}
\usepackage[T1]{fontenc}

\usepackage{tabularx}
\usepackage[numbers]{natbib}
\usepackage[pdftex]{graphicx}
\usepackage[pdftex,linkcolor=black,pdfborder={0 0 0}]{hyperref}
\usepackage{url}
\usepackage{calc}
\usepackage{enumitem}
\usepackage{amsmath}
\usepackage{amsfonts}
\usepackage{authblk}
\usepackage{amsthm}
\usepackage{amssymb}
\usepackage{nicefrac}
\usepackage{algorithm}
\usepackage{algpseudocode}
\usepackage{subfig}
\usepackage{float}
\usepackage{booktabs}
\usepackage{adjustbox}
\usepackage{multirow}
\usepackage{multicol}
\usepackage{makecell}
\usepackage{textcomp}
\usepackage{xcolor}
\usepackage[english]{babel}

\newtheorem{theorem}{Theorem}[section]
\newtheorem{lemma}[theorem]{Lemma}
\newtheorem{proposition}[theorem]{Proposition}

\newcommand{\Ur}{\mathcal{U}_r}

\newcommand{\argmin}{\mathop{\mathrm{argmin}}}
\newcommand{\argmax}{\mathop{\mathrm{argmax}}}
\newcommand{\Ht}{\mathcal{H}}
\newcommand{\Nt}{\mathcal{N}}
\newcommand{\Et}{\mathcal{E}}

\usepackage[
  a4paper,
  lmargin=0.1666\paperwidth,
  rmargin=0.1666\paperwidth,
  tmargin=0.1111\paperheight,
  bmargin=0.1111\paperheight
]{geometry}

\usepackage[all]{nowidow}
\usepackage[protrusion=true,expansion=true]{microtype}

\title{Connected by Construction: Learning Tractable Near-Tour Marginals for Travseling Salesman Problems}

\author[1]{Ke Sun}
\author[1]{Xinyuan Zhang}
\author[1,*]{Xinwu Qian}

\affil[1]{Department of Civil and Environmental Engineering, Rice University, Houston, TX}
\affil[*]{Corresponding author}

\date{\today}

\begin{document}

\maketitle

\begin{abstract}
Learning-based methods for the traveling salesman problem (TSP) are often evaluated through the tours produced after decoding or search, but the learned object itself frequently lives in a surrogate space such as heatmaps, assignments, construction policies, or search-guidance scores. This hides the fundamental question: what Hamiltonian structure has actually been learned before decoding? In this study, we directly answer this question by learning TSP through a structurally meaningful latent object, rather than leaving most of the Hamiltonian structure to the final decoding stage.  Based on a connected-by-construction rooted $1$-tree Gibbs family, we propose an end-to-end unsupervised learning pipeline called \emph{C2TSP}. The pipeline learns residual edge perturbations from unbiased TSP cost through implicit differentiation. For structural correction, a smoothed Held--Karp layer restores expected degree balance, while certificate-guided sharpening further pushes the connected distribution toward more tour-like structures. 
Experiments show that C2TSP yields strong decoding performance while preserving interpretable structural information. Ablations further verify that edge perturbation and certificate-guided sharpening jointly improve both tour cost and tour-like structure.
\end{abstract}

\section{Introduction}\label{sec:introduction}

Machine learning methods have been increasingly used for combinatorial optimization because they can provide fast approximate solutions while maintaining competitive empirical performance. As a canonical combinatorial optimization problem, the traveling salesman problem (TSP) has received particular attention. A feasible TSP solution must satisfy both local and global structural constraints: every node has degree two, and the selected edges must form one connected cycle. However, many learning-based TSP methods do not represent this global connected edge structure directly in the learned object. Autoregressive construction models generate tours sequentially and enforce feasibility procedurally through masking and decoding rules~\citep{vinyals2015pointer,bello2016neural,kool2019attention,kwon2020pomo}. Heatmap-based methods predict edge scores and then rely on repair, sampling, local search, or tree search to obtain a valid tour~\citep{fu2021generalize,qiu2022dimes,min2023utsp,sun2023difusco,li2024fast}. Assignment-space relaxations, such as Sinkhorn-based methods, provide smooth permutation-like objects but do not directly model the edge-adjacency structure that determines tour cost~\citep{mena2018gumbelsinkhorn,paulus2020gradient,min2026graph}. Hybrid methods such as NeuroLKH use learned signals to guide strong handcrafted heuristics~\citep{xin2021neurolkh}. Although these methods have achieved strong empirical results with affordable inference time, two limitations remain: they do not provide a tractable global distribution over connected combinatorial objects, and their final performance can be tightly coupled with downstream decoding or search, making the learned representation and the post-processing effect difficult to separate~\citep{xia2024position,bother2022s}.

In contrast, we propose a connected-by-construction representation based on the rooted $1$-tree family. After fixing a root node, a rooted $1$-tree consists of a spanning tree on the non-root nodes together with exactly two edges incident to the root. Therefore, every latent configuration is globally connected by construction and satisfies the root degree constraint exactly. The remaining structural defect is the non-root degree mismatch: a rooted $1$-tree becomes a Hamiltonian cycle when every non-root node also has degree two. Unlike a Gibbs distribution over Hamiltonian cycles, whose partition function is intractable, the rooted $1$-tree Gibbs family admits an exact factorization, which enables exact marginal computation and expected-cost training. This connects our method to the classical Held--Karp relaxation, where node penalties modify edge costs and minimum $1$-trees provide strong TSP lower bounds~\citep{held1970traveling,held1971traveling}. Prior work has explored tree-based or Held--Karp-related structures for TSP from theoretical, heuristic, or learning perspectives~\citep{parjadis2023learning,gharan2011randomized,genova2017experimental}. Our goal is to turn this connected structure into an end-to-end differentiable latent representation for learning near-tour marginals.

We build an end-to-end unsupervised learning pipeline from this connected-by-construction rooted $1$-tree latent family, called \emph{C2TSP}. The model first predicts residual edge perturbations that tilt the rooted $1$-tree Gibbs distribution toward low-cost structures guided by the expected TSP cost. To correct the remaining non-root degree defect, we introduce a smoothed Held--Karp equilibration layer, motivated by its compatibility with the rooted $1$-tree prior and the classical Held--Karp lower-bound structure~\citep{held1970traveling}. This layer solves for node-additive dual variables so that the expected degree of every non-root node equals two under the dual-modified rooted $1$-tree distribution, and it is trained end-to-end by differentiating through the equilibrium map, following the general paradigm of implicit optimization layers~\citep{amos2017optnet,agrawal2019diffcp,bai2019deq,blondel2022modular}. After equilibration, the residual error is no longer a connectivity failure, but the non-tour mass inside a connected, degree-balanced latent family. We control this residual defect using a certificate that upper-bounds non-tour mass by the total variance of the non-root degrees. A certificate-guided residual refinement then reduces this non-tour mass, followed by another re-equilibration step. Finally, decoding is performed by sampling from the learned edge marginals rather than by search-based post-processing.

The contributions of this paper are as follows.
\begin{itemize}
    \item We introduce a tractable rooted $1$-tree representation for TSP that ensures global connectivity by construction while admitting exact marginal computation for expected-cost training.
    \item We develop C2TSP, an end-to-end unsupervised learning pipeline in which a GNN predicts residual edge perturbations from the expected TSP cost and a smoothed Held--Karp equilibration layer restores expected degree balance.
    \item We derive a non-tour mass certificate and use it to guide residual refinement, pushing the learned connected distribution toward more tour-like structures.
    \item We empirically show that the proposed rooted $1$-tree representation yields strong pure-decoding performance, while several baselines benefit more from additional local search. Further ablations show how edge perturbation and certificate-guided sharpening shift the connected distribution toward more tour-like and lower-cost structures.
\end{itemize}
\section{Methodology}\label{sec:methodology}

\subsection{Problem definition}\label{subsec:problem}
\paragraph{Exact Hamiltonian Gibbs model.} Let $V=\{1,\ldots,n\}$ and let $\Et$ be the edge set of the complete undirected graph. Each Hamiltonian cycle $H\in\Ht$ has edge-incidence vector $x_H\in\{0,1\}^{|\Et|}$, and $D\in\mathbb R^{|\Et|}$ denotes the edge-cost vector. A smooth exact model is the Hamiltonian Gibbs law
\begin{align}
    q_D^{\Ht}(H)
    =
    \frac{1}{Z_{\Ht}(D)}
    \exp\!\left(
        -\frac{1}{\tau}\langle D,x_H\rangle
    \right),
    \qquad H\in\Ht ,
    \label{eq:ham-gibbs}
\end{align}
where $Z_{\Ht}(D)$ is the Hamiltonian partition function. This model is intractable because evaluating $Z_{\Ht}(D)$ and differentiating $\log Z_{\Ht}(D)$ require summing over exponentially many Hamiltonian cycles.

\paragraph{Connected by construction: the rooted $1$-tree surrogate.}
Since exact Hamiltonian Gibbs inference is intractable, we pursue a tractable surrogate with structural guarantee, that (1) preserves core tour structure, (2) certifies the structural gap it leaves, and (3) learns to reduce that gap under the equilibration guarantees developed below. Note that a Hamiltonian cycle is always a connected spanning edge set with degree two at every node. We preserve connectivity by using rooted $1$-trees, and address the remaining degree condition via a smoothed Held--Karp (HK) equilibration layer.

Fix a root $r\in V$ and write $\bar V:=V\setminus\{r\}$. Let $\Ur$ denote the family of rooted $1$-trees, namely spanning trees on $\bar V$ together with exactly two edges incident to $r$. Every $U\in\Ur$ is connected by construction, satisfies $d_r(U)=2$, and has edge-incidence vector $x_U\in\{0,1\}^{|\Et|}$. And it is easy to verify that $\mathcal H\subseteq \mathcal U_r$. For any edge parameter $\eta\in\mathbb R^{|\Et|}$, we define the rooted $1$-tree Gibbs law
\begin{align}
    q_\eta(U)
    =
    \frac{1}{Z_r(\eta)}
    \exp\!\left(
        -\frac{1}{\tau}\langle \eta,x_U\rangle
    \right),
    \qquad U\in\Ur ,
    \label{eq:rooted-gibbs}
\end{align}
and its edge marginal
\begin{align}
    \mu(\eta)
    :=
    \mathbb E_{q_\eta}[x_U].
    \label{eq:rooted-marginal}
\end{align}

Lemma~\ref{lem:tractable-rooted-map} shows that the tractable marginal~\eqref{eq:rooted-gibbs} is tractable with exact partition function. The remaining task is to formalize the edge
parameters $\eta$ that preserve the degree-two structure and reduce structural deficiency inside $\Ur$.

\begin{lemma}[Tractable rooted $1$-tree marginal]
\label{lem:tractable-rooted-map}
The partition function $Z_r(\eta)$ in~\eqref{eq:rooted-gibbs} factorizes
into a weighted spanning-tree partition function on $\bar V$ and a
two-edge root-selection normalizer. Consequently, $Z_r(\eta)$, the edge
marginal $\mu(\eta)$, and all degree moments are computable exactly.
\end{lemma}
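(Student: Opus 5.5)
The plan is to exploit the product structure of $\Ur$. Every $U\in\Ur$ decomposes uniquely as $U=T\cup\{\{r,a\},\{r,b\}\}$. Here $T$ is a spanning tree of the complete graph on $\bar V$, and $\{a,b\}\subseteq\bar V$ is an unordered pair with $a\neq b$. Conversely, every such pair $(T,\{a,b\})$ gives a rooted $1$-tree. So $\Ur\cong\mathcal T(\bar V)\times\binom{\bar V}{2}$. Since $\langle\eta,x_U\rangle=\sum_{e\in T}\eta_e+\eta_{ra}+\eta_{rb}$, the Boltzmann weight factorizes. Writing $w_e:=\exp(-\eta_e/\tau)$, I would obtain
\begin{align*}
Z_r(\eta)=\underbrace{\sum_{T\in\mathcal T(\bar V)}\prod_{e\in T}w_e}_{Z_{\mathrm{tree}}(\eta)}\;\cdot\;\underbrace{\sum_{\{a,b\}\subseteq\bar V}w_{ra}w_{rb}}_{Z_{\mathrm{root}}(\eta)}.
\end{align*}
Under $q_\eta$ the tree part and the root pair are therefore independent. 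This is the claimed factorization.

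Next I would show that each factor is exactly computable. For $Z_{\mathrm{tree}}$ I would invoke the weighted Matrix-Tree theorem. Let $L$ be the weighted Laplacian of $\bar V$ with weights $w_e$. Then $Z_{\mathrm{tree}}=\det L^{(v)}$, where $L^{(v)}$ is $L$ with the row and column of any fixed $v\in\bar V$ deleted; this costs $O(n^3)$. For the root factor, set $s_a:=w_{ra}$. Then $Z_{\mathrm{root}}=\tfrac12\big[(\sum_a s_a)^2-\sum_a s_a^2\big]$, i.e. the second elementary symmetric polynomial $e_2(s)$, which costs $O(n)$.

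For the marginals, I would use $\mu_e(\eta)=-\tau\,\partial\log Z_r/\partial\eta_e$. Because of the factorization, only one factor depends on each edge. For a non-root edge $e=\{i,j\}$, Jacobi's formula gives $\mu_e=w_e\,(\mathbf e_i-\mathbf e_j)^\top (L^{(v)})^{-1}(\mathbf e_i-\mathbf e_j)$, the effective resistance times the weight. Here a row or column indexed by $v$ is taken as zero. For a root edge, $\mu_{ra}=s_a(S-s_a)/e_2(s)$ with $S=\sum_b s_b$.

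The step I expect to be the main obstacle is the claim about \emph{all degree moments}. First moments are immediate, since $\mathbb E[d_i]=\sum_{e\ni i}\mu_e$. For second moments I need pairwise edge correlations $\mathbb E[x_ex_f]$. For two non-root edges, I would use the fact that the uniform/weighted spanning tree measure is a determinantal point process with transfer-current kernel $K(e,f)=\sqrt{w_ew_f}\,b_e^\top (L^{(v)})^{-1}b_f$. The $k$-edge joint inclusion probabilities are then $k\times k$ minors of $K$, so every moment of $d_i=\sum_{e\ni i}x_e$ for $i\in\bar V$ is a finite polynomial in these minors. The root-edge indicators are handled exactly from the pair distribution. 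In particular, $\mathbb E[x_{ra}x_{rb}]=s_as_b/e_2(s)$, and root-edge products with more than two distinct edges vanish. Mixed terms factor by independence of $T$ and the root pair. Combining these expressions gives exact formulas for $\mathrm{Var}(d_i)$ and, more generally, any fixed-order moment in polynomial time.
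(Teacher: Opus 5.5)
Your proposal is correct and takes essentially the same approach as the paper: the unique decomposition $U=T\cup\{\{r,a\},\{r,b\}\}$ gives $Z_r=Z_{\mathrm{tree}}\cdot e_2(s)$, the weighted matrix-tree theorem handles the tree factor, non-root marginals are weight times effective resistance, and root marginals are $s_a(S-s_a)/e_2(s)$. Your transfer-current (determinantal) argument for higher-order and mixed degree moments, using the independence of the tree and root-pair factors, fills in the ``all degree moments'' claim that the paper's proof only asserts from the single Laplacian factorization.
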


\subsection{Held--Karp equilibration and structural-gap certificate}
\label{subsec:hk-certificate}

Given the $1$-tree Gibbs family, the remaining Hamiltonian condition is to enforce degree two at the non-root nodes. Since the Gibbs law is parameterized by edge costs, the dual prices for the non-root degree constraints need to be represented as an edge-additive field. For $\lambda\in\mathbb R^{n-1}$ indexed by $\bar V$, define the node-to-edge
lift $A:\mathbb R^{n-1}\to\mathbb R^{|\Et|}$ by
\begin{align}
(A\lambda)_{ij}
=
\begin{cases}
\lambda_i+\lambda_j, & i,j\in\bar V,\\
\lambda_k, & \{i,j\}=\{r,k\},\ k\in\bar V .
\end{cases}
\label{eq:lift}
\end{align}
Then, for every $U\in\Ur$,
\begin{align}
    \langle A\lambda,x_U\rangle
    =
    \sum_{i\in\bar V}\lambda_i d_i(U).
    \label{eq:lift-degree-identity}
\end{align}
Thus $A\lambda$ is exactly the edge-additive representation of the non-root degree prices.

For a generic edge cost field $c\in\mathbb R^{|\Et|}$, consider the
entropy-regularized degree-correction problem for the $1$-tree Gibbs family as 
\begin{align}
    \min_{p\in\Delta(\Ur)}
    \quad&
    \sum_{U\in\Ur}p(U)\langle c,x_U\rangle
    +
    \tau\sum_{U\in\Ur}p(U)\log p(U)
    \label{eq:hk-primal}
    \\
    \mathrm{s.t.}\quad&
    \sum_{U\in\Ur}p(U)d_i(U)=2,
    \qquad i\in\bar V ,
    \nonumber
\end{align}
where $\Delta(\Ur)$ denotes the probability simplex over rooted $1$-trees. This problem asks for the lowest free-energy distribution over connected rooted $1$-trees whose expectation of non-root degree match the Hamiltonian target. Substituting the lift identity~\eqref{eq:lift-degree-identity} into the Lagrangian collapses the degree multipliers into a tilted edge cost $c+A\lambda$, which keeps the partition function in the rooted 1-tree family denoted by $q_{c+A\lambda}$, as stated below. 

\begin{theorem}[Smoothed Held--Karp equilibrium]
\label{thm:hk-equilibrium}
The Lagrange dual of~\eqref{eq:hk-primal} is
\begin{align}
    \max_{\lambda\in\mathbb R^{n-1}}
    \Phi_\tau(c,\lambda),
    \qquad
    \Phi_\tau(c,\lambda)
    :=
    -\tau\log Z_r(c+A\lambda)
    -
    2\mathbf 1^\top\lambda .
    \label{eq:hk-dual}
\end{align}
Moreover,
\begin{align}
    \frac{\partial \Phi_\tau(c,\lambda)}{\partial \lambda_i}
    &=
    \mathbb E_{q_{c+A\lambda}}[d_i(U)]-2,
    \qquad i\in\bar V,
    \label{eq:hk-gradient}
    \\
    \frac{\partial^2 \Phi_\tau(c,\lambda)}
    {\partial\lambda_i\partial\lambda_j}
    &=
    -\frac{1}{\tau}
    \operatorname{Cov}_{q_{c+A\lambda}}
    \bigl(d_i(U),d_j(U)\bigr),
    \qquad i,j\in\bar V .
    \label{eq:hk-hessian}
\end{align}
Hence $\Phi_\tau(c,\lambda)$ is concave in $\lambda$, and any stationary
maximizer $\lambda^\star(c)$ satisfies
\begin{align}
    \mathbb E_{q_{c+A\lambda^\star(c)}}[d_i(U)]
    =
    2,
    \qquad i\in\bar V .
    \label{eq:hk-degree-balance}
\end{align}
\end{theorem}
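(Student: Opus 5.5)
The plan is the standard Lagrangian duality argument for entropy-regularized linear programs over a finite set, combined with the lift identity~\eqref{eq:lift-degree-identity}. Introduce multipliers $\lambda_i$ for the degree constraints and $\nu$ for normalization $\sum_U p(U)=1$, keeping $p\ge 0$ implicit because the entropy term forces positivity. Write the Lagrangian as
\begin{align*}
\mathcal L(p,\lambda,\nu)
=
\sum_{U}p(U)\langle c,x_U\rangle
+\tau\sum_U p(U)\log p(U)
+\sum_{i\in\bar V}\lambda_i\Bigl(\sum_U p(U)d_i(U)-2\Bigr)
+\nu\Bigl(\sum_U p(U)-1\Bigr).
\end{align*}
By~\eqref{eq:lift-degree-identity}, the degree term becomes $\sum_U p(U)\langle A\lambda,x_U\rangle-2\mathbf 1^\top\lambda$. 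The Lagrangian therefore equals the free energy of $p$ under the tilted cost $c+A\lambda$, minus $2\mathbf 1^\top\lambda$. Minimizing over $p$ in the simplex is the Gibbs variational principle. The minimizer is $q_{c+A\lambda}$ from~\eqref{eq:rooted-gibbs}, and the minimum value is $-\tau\log Z_r(c+A\lambda)$. This yields the dual~\eqref{eq:hk-dual}. I would check this by a first-order condition in $p$, or more cleanly via the identity $\sum_U p\langle\eta,x_U\rangle+\tau\sum_U p\log p=-\tau\log Z_r(\eta)+\tau\,\mathrm{KL}(p\,\|\,q_\eta)$, which gives the minimizer and the value at once.

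For the derivatives, differentiate $\log Z_r(c+A\lambda)=\log\sum_U\exp(-\tau^{-1}\sum_i\lambda_i d_i(U)-\tau^{-1}\langle c,x_U\rangle)$, again using~\eqref{eq:lift-degree-identity}. The first derivative is $-\tau^{-1}\mathbb E_{q}[d_i]$. Multiplying by $-\tau$ and subtracting $2$ gives~\eqref{eq:hk-gradient}. Differentiating the expectation once more gives the usual exponential-family covariance $-\tau^{-1}\operatorname{Cov}(d_i,d_j)$ for $\partial\mathbb E[d_i]/\partial\lambda_j$, which is~\eqref{eq:hk-hessian}. All sums are finite, and Lemma~\ref{lem:tractable-rooted-map} guarantees that $Z_r$ is a finite positive smooth function, so interchanging differentiation and summation is immediate.

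Concavity follows because the Hessian is $-\tau^{-1}$ times a covariance matrix, which is positive semidefinite. At any stationary point the gradient~\eqref{eq:hk-gradient} vanishes, giving~\eqref{eq:hk-degree-balance}.

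The only delicate point, which I expect to be the main obstacle, is whether the phrase ``the Lagrange dual'' should carry a strong-duality or attainment claim. The statement only asserts the dual form and the stationarity characterization of a maximizer, so weak duality via the Gibbs variational principle suffices. If attainment were needed, I would note that the primal is a strictly convex problem with linear constraints. Slater's condition then requires a strictly positive $p$ on $\Ur$ with expected non-root degrees equal to $2$. The uniform mixture over Hamiltonian cycles (using $\Ht\subseteq\Ur$) has this degree balance, and mixing it with a small amount of full-support mass can be corrected to stay feasible, though this step needs care. I would state this only as a remark and not rely on it.
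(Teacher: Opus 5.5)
Your proposal is correct and follows essentially the same route as the paper: absorb the degree multipliers into the tilted cost $c+A\lambda$ via the lift identity, minimize the free energy over the simplex to obtain $q_{c+A\lambda}$ with value $-\tau\log Z_r(c+A\lambda)-2\mathbf 1^\top\lambda$, then differentiate the log-partition function to get the mean-degree gradient and the $-\tau^{-1}$-covariance Hessian, from which concavity and degree balance follow. The explicit normalization multiplier and the KL identity are only cosmetic refinements of the paper's direct Gibbs minimization, and you are right that the statement needs no strong-duality claim.
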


The proof is given in Appendix~\ref{app:proof-hk-equilibrium}. Theorem~\ref{thm:hk-equilibrium} shows that the degree multipliers enter each $1$-tree energy through $\langle A\lambda,x_U\rangle-2\mathbf 1^\top\lambda$, and that optimizing over $p\in\Delta(\Ur)$ recovers the Gibbs law $q_{c+A\lambda}$. The HK layer is therefore the implicit map
\[
    c
    \mapsto
    \lambda^\star(c)
    \quad\text{defined by}\quad
    \nabla_\lambda\Phi_\tau(c,\lambda^\star(c))=0 .
\]
This equilibration removes the degree defect of the connected surrogate. The remaining structural gap is the residual probability mass assigned to degree-balanced rooted $1$-trees outside Hamiltonian-cycle support. The next result certifies this residual gap. 

\paragraph{Structural-gap certificate.}
Let $\Nt:=\Ur\setminus\Ht$, and $\gamma(q):=q(\Nt)$
denote the set of non-tour $1$ trees and their mass, respectively.  HK equilibration enforces degree balance only in expectation: \(\mathbb E_q[d_i(U)]=2\) for all \(i\in\bar V\). But this does not force \(q\) to be supported on \(\Ht\), as non-root degree may still fluctuate across rooted $1$-trees but still having mean degree 2. The residual gap is therefore bounded by degree variance and can serve as a certificate that we can formally establish using same second-order statistic governing the HK Hessian~\eqref{eq:hk-hessian}, as stated in the theorem below. 

\begin{theorem}[Certificate for residual non-tour mass]
\label{thm:certificate}
For \(q\in\Delta(\Ur)\) at the HK equilibrium solution, the non-tour mass satisfies
\begin{align}
    \gamma(q)
    \le
    \frac12
    \sum_{i\in\bar V}
    \operatorname{Var}_q(d_i(U)).
    \label{eq:non-tour-certificate}
\end{align}
For the HK-equilibrated Gibbs law
\(q^\star=q_{c+A\lambda^\star(c)}\), this becomes
\begin{align}
\label{eq:hessian-certificate}
    \gamma(q^\star)
    \le
    \frac12
    \sum_{i\in\bar V}
    \operatorname{Var}_{q^\star}(d_i(U))=
    \frac12
    \operatorname{tr}
    \operatorname{Cov}_{q^\star}(d(U),d(U))
    =
    -\frac{\tau}{2}
    \operatorname{tr}
    \nabla_{\lambda\lambda}^2
    \Phi_\tau(c,\lambda^\star(c)).
\end{align}
\end{theorem}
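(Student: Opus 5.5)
Since $q$ is at the HK equilibrium, $\mathbb E_q[d_i(U)]=2$ for every $i\in\bar V$. The plan is therefore to bound the non-tour mass by a pointwise inequality on $\Ur$ and then take expectations. Concretely, I will show that every $U\in\Ur$ satisfies
\begin{align*}
    \mathbf 1[U\in\Nt]\le \frac12\sum_{i\in\bar V}(d_i(U)-2)^2 .
\end{align*}
Averaging this under $q$ gives $\gamma(q)\le\frac12\sum_{i}\mathbb E_q[(d_i(U)-2)^2]$. Because the mean of $d_i(U)$ equals $2$, each term $\mathbb E_q[(d_i(U)-2)^2]$ is exactly $\operatorname{Var}_q(d_i(U))$, which is~\eqref{eq:non-tour-certificate}.

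For the pointwise inequality, I will use two structural facts about rooted $1$-trees.
\begin{itemize}
    \item[(i)] Every $U\in\Ur$ has $n$ edges, namely $n-2$ tree edges plus $2$ root edges. Hence $\sum_{v\in V}d_v(U)=2n$, and since $d_r(U)=2$, we get $\sum_{i\in\bar V}(d_i(U)-2)=0$.
    \item[(ii)] A rooted $1$-tree with all degrees equal to $2$ is a connected $2$-regular spanning graph, i.e.\ a Hamiltonian cycle. So if $U\in\Nt$, some non-root node has $d_i(U)\ne2$.
\end{itemize}
The deviations $d_i(U)-2$ are integers that sum to zero by (i). If one of them is nonzero, there must be at least one strictly positive and at least one strictly negative deviation, each of absolute value at least $1$. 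This gives $\sum_i(d_i(U)-2)^2\ge 2$, so the right-hand side of the pointwise inequality is at least $1$ on $\Nt$. On $\Ht$ the left-hand side is $0$, so the inequality holds there trivially. This parity and zero-sum step is the only part that needs care: it is exactly where the factor $\tfrac12$ comes from, and it relies on the root degree being pinned to $2$ by construction.

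For~\eqref{eq:hessian-certificate}, Theorem~\ref{thm:hk-equilibrium} guarantees that $q^\star=q_{c+A\lambda^\star(c)}$ satisfies degree balance~\eqref{eq:hk-degree-balance}, so the first part applies to $q^\star$. The sum of variances is by definition the trace of the covariance matrix of $d(U)$. Taking the trace of the Hessian formula~\eqref{eq:hk-hessian} at $\lambda^\star(c)$ gives $\operatorname{tr}\nabla^2_{\lambda\lambda}\Phi_\tau=-\frac1\tau\operatorname{tr}\operatorname{Cov}_{q^\star}(d(U),d(U))$. Multiplying by $-\tau/2$ yields the final equality.
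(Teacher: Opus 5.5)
Your proposal is correct and follows essentially the same argument as the paper: the pointwise bound $\mathbf 1\{U\in\Nt\}\le\frac12\sum_{i\in\bar V}(d_i(U)-2)^2$, which holds because the integer deviations sum to zero (since $U$ has $n$ edges and $d_r(U)=2$); then expectation under degree balance; then the trace of the Hessian identity~\eqref{eq:hk-hessian}. You also make explicit a step the paper leaves implicit, namely that a rooted $1$-tree with all non-root degrees equal to $2$ is a connected $2$-regular spanning graph and hence lies in $\Ht$. The key step is integrality plus zero sum rather than parity, but that is only a matter of naming.
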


The proof is given in Appendix~\ref{app:proof-certificate}.
Thus the same covariance
that defines the HK Hessian in Theorem~\ref{thm:hk-equilibrium} also certifies the residual structural gap left
after HK equilibration. As a consequence, the $1$-tree Gibbs family not only provides tractable marginal, but also supplies the analytical structure that allows us to enforce the structure primitives to become Hamiltonian cycles. 

\subsection{Oracle residual edge field and learned re-equilibration}
\label{subsec:oracle-edge-field}

The certificate in Theorem~\ref{thm:certificate} measures the structural gap left after HK equilibration. This gap cannot be removed by another node-price update. To see this, following~\ref{eq:hk-primal}, for any $\beta\in\mathbb R^{n-1}$,
\begin{equation}\label{eq:hk-node-shift-equivariance}
    \Phi_\tau(c+A\beta,\lambda)
    =
    \Phi_\tau(c,\lambda+\beta)
    +
    2\mathbf 1^\top\beta .
\end{equation}

Thus a node-additive edge shift only reparametrizes the HK price and leaves the equilibrated Gibbs law and its edge marginals unchanged. Hence the residual edge correction is in general not unique and can be absorbed into the HK price. We need to select the identifiable representative by removing this node-price component and retaining the part orthogonal to all node-additive edge fields $A\lambda$.

To identify the oracle form of this residual, suppose the omitted tour-valid structure inside the rooted $1$-tree family is represented at the marginal level by inequalities $C\mu\le h$,
where $C$ and $h$ are exponential in size in general, due to the combinatorial structure of the problem. Now extending~\eqref{eq:hk-primal} with these constraints gives the entropy-regularized oracle recovery problem
\begin{align}
    \min_{p\in\Delta(\Ur)}
    \quad&
    \sum_{U\in\Ur}
    p(U)\langle D,x_U\rangle
    +
    \tau
    \sum_{U\in\Ur}p(U)\log p(U)
    \label{eq:oracle-recovery}
    \\
    \mathrm{s.t.}\quad&
    \mathbb E_p[d_i(U)]=2,
    \qquad i\in\bar V,
    \qquad C\,\mathbb E_p[x_U]\le h .
    \nonumber
\end{align}

\begin{proposition}[Oracle orthogonal residual edge field]
\label{prop:oracle-edge-field}
Let \((p^\star,\bar\lambda^\star,\nu^\star)\) satisfy the KKT conditions of
\eqref{eq:oracle-recovery}, where \(\bar\lambda^\star\) are the degree
multipliers and \(\nu^\star\ge0\) are the multipliers of \(C\mu\le h\). Define
the raw oracle edge field $\widetilde\Gamma^\star:=C^\top\nu^\star$.
Let
\[
    \widetilde\Gamma^\star
    =
    A\beta^\star+\Gamma^\star,
    \qquad
    \Gamma^\star\in\operatorname{Im}(A)^\perp ,
\]
be its decomposition into a node-additive component and an orthogonal residual
component, and define the absorbed HK price $\lambda^\star:=\bar\lambda^\star+\beta^\star$.
Then \(\lambda^\star\) is an HK price for the shifted cost field
\(D+\Gamma^\star\):
$\lambda^\star
\in
\argmax_{\lambda}
\Phi_\tau(D+\Gamma^\star,\lambda)$.

Consequently, the oracle re-equilibrated edge cost is
\[
    \eta^\star:=D+\Gamma^\star+A\lambda^\star .
\]

\end{proposition}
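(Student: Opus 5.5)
The plan is to read off the KKT conditions of \eqref{eq:oracle-recovery} in Gibbs form, absorb the node-additive part of the raw field into the degree multipliers, and then recognise the resulting stationarity condition as the HK stationarity condition of Theorem~\ref{thm:hk-equilibrium} for the shifted cost $D+\Gamma^\star$.

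\textbf{Step 1: Gibbs form of $p^\star$.} Form the Lagrangian of \eqref{eq:oracle-recovery} with multipliers $\bar\lambda$ for the degree constraints, $\nu\ge0$ for $C\,\mathbb E_p[x_U]\le h$, and a scalar multiplier for $\sum_U p(U)=1$. Use the lift identity \eqref{eq:lift-degree-identity} to write $\sum_i\bar\lambda_i d_i(U)=\langle A\bar\lambda,x_U\rangle$, and note that $\langle \nu, C x_U\rangle=\langle C^\top\nu,x_U\rangle$. Stationarity in $p(U)$ then gives, exactly as in the proof of Theorem~\ref{thm:hk-equilibrium},
\[
p^\star(U)\propto \exp\!\Bigl(-\tfrac1\tau\langle D+A\bar\lambda^\star+\widetilde\Gamma^\star,\,x_U\rangle\Bigr),
\]
that is, $p^\star=q_{D+A\bar\lambda^\star+\widetilde\Gamma^\star}$. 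The entropy term keeps $p^\star$ strictly positive, so the nonnegativity constraints on $p$ are inactive and the multiplier for normalisation only fixes the partition function.

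\textbf{Step 2: absorb the node-additive component.} The decomposition $\widetilde\Gamma^\star=A\beta^\star+\Gamma^\star$ exists because $\operatorname{Im}(A)$ is a closed subspace of $\mathbb R^{|\Et|}$; take $A\beta^\star$ to be the orthogonal projection. Any $\beta^\star$ realising that projection will do, so injectivity of $A$ is not needed. Substituting gives
\[
D+A\bar\lambda^\star+\widetilde\Gamma^\star=(D+\Gamma^\star)+A(\bar\lambda^\star+\beta^\star)=(D+\Gamma^\star)+A\lambda^\star,
\]
hence $p^\star=q_{(D+\Gamma^\star)+A\lambda^\star}$.

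\textbf{Step 3: HK optimality.} Primal feasibility in the KKT system gives $\mathbb E_{p^\star}[d_i(U)]=2$ for all $i\in\bar V$. By \eqref{eq:hk-gradient} this says exactly that $\nabla_\lambda\Phi_\tau(D+\Gamma^\star,\lambda^\star)=0$. Theorem~\ref{thm:hk-equilibrium} states that $\Phi_\tau(D+\Gamma^\star,\cdot)$ is concave, so this stationary point is a global maximiser, i.e.\ $\lambda^\star\in\argmax_\lambda\Phi_\tau(D+\Gamma^\star,\lambda)$. The conclusion $\eta^\star=D+\Gamma^\star+A\lambda^\star$ is then just the definition of the re-equilibrated cost, and Step 2 shows $p^\star=q_{\eta^\star}$.

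As a consistency check, the node-shift identity \eqref{eq:hk-node-shift-equivariance} confirms that moving $A\beta^\star$ between the cost and the price leaves the equilibrated law unchanged. The main obstacle is Step 1, specifically pinning down the sign convention for $\nu^\star$. The constraint $C\mu\le h$ enters the Lagrangian as $+\langle\nu,C\mu-h\rangle$, which adds $+C^\top\nu$ to the energy, consistent with the sign in $\widetilde\Gamma^\star=C^\top\nu^\star$. One should also verify that the KKT stationarity condition is stated over the full simplex, so that the Gibbs form is legitimate; complementary slackness plays no role beyond fixing $\nu^\star$.
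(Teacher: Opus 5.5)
Your proposal is correct and follows essentially the same route as the paper: stationarity gives the Gibbs form $p^\star=q_{D+A\bar\lambda^\star+C^\top\nu^\star}$, and absorbing $A\beta^\star$ into the price gives $p^\star=q_{D+\Gamma^\star+A\lambda^\star}$. The degree constraints together with \eqref{eq:hk-gradient} and concavity of $\Phi_\tau$ then yield $\lambda^\star\in\argmax_\lambda\Phi_\tau(D+\Gamma^\star,\lambda)$. Your extra remarks on positivity of $p^\star$, the sign of $\nu^\star$, and existence of the orthogonal decomposition make explicit details the paper leaves implicit.
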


The proof is given in Appendix~\ref{app:proof-oracle-edge-field}.
Proposition~\ref{prop:oracle-edge-field}
shows that the raw dual effect \(C^\top\nu^\star\) separates into a node-additive component \(A\beta^\star\), which is absorbed into the HK price, and an orthogonal residual component \(\Gamma^\star\in\operatorname{Im}(A)^\perp\), which is the identifiable oracle edge field. The corresponding rooted \(1\)-tree Gibbs law \(q_{\eta^\star}\) then recovers the oracle solution. Since the omitted constraints are intractable, we seek to learn a residual edge field \(\Gamma_\theta\) and re-equilibrate:
\begin{align}
    \lambda^\star(\theta)
    &:=
    \lambda^\star(D+\Gamma_\theta),
    \qquad
    \eta^\star_\theta
    :=
    D+\Gamma_\theta+A\lambda^\star(\theta),
    \label{eq:learned-reequilibration}
\end{align}
Thus in this study, we use learning to supply the residual edge geometry, while rooted \(1\)-tree inference and HK equilibration are used to structure the trajectory in shaping and learning such residual. The learned law is degree-balanced after re-equilibration, and Theorem~\ref{thm:certificate} certifies its remaining structural gap.

\subsection{Unsupervised learning with certificate-directed re-equilibration}
\label{subsec:unsup-learning}

With learnable object identified as an orthogonal residual edge field, we next present the formal pipeline to train this residual that is guided by the unbiased TSP cost:
\begin{equation}
    \mathcal L_{\rm TSP}(\theta)
    =
    \frac1M
    \sum_{m=1}^M
    \langle D^m,u_\theta(D^m)\rangle ,
    \label{eq:final-unsup-loss}
\end{equation}
where \(u_\theta(D)\) is the edge marginal of the $1$-tree Gibbs family produced by the forward map discussed below. This induces an unsupervised learning framework, and more importantly, that \textit{learns from unbiased TSP cost signals} as the structural constraints are not enforced into the loss function as penalty terms. The forward process supplies the structure constraints, where HK recovery leads to expected degree balance, and certificate-directed sharpening further reduce a certified upper bound on residual non-tour mass. The backward propagation can be computed via implicit gradient with details discussed in the Appendix~\ref{app:training-implicit}.

\paragraph{Canonical HK forward map.} By Theorem~\ref{thm:hk-equilibrium}, every edge input \(c\) induces HK price \(\lambda^\star(c)\), an equilibrated edge cost \(c+A\lambda^\star(c)\), and a rooted \(1\)-tree marginal \(\mu(c+A\lambda^\star(c))\). As a result, for each instance \(D\), the canonical forward map starts with a raw edge field \(\widetilde\Gamma_\theta(D)\), followed by removing the node-price component as in Prop.~\ref{prop:oracle-edge-field}: 
\[ \Gamma_\theta^{(0)}(D) = P_\perp\widetilde\Gamma_\theta(D), \qquad P_\perp:=P_{\operatorname{Im}(A)^\perp}. \label{eq:projected-residual-learning} \] 

The projected residual then enters the HK equilibration layer. Given
$c_\theta(D)
    =
    D+\Gamma_\theta^{(0)}(D),$
one can run \(L\) HK iterations indexed by
\(\ell=0,\ldots,L-1\), producing
\[
    \lambda_{\theta,\ell}^{(0)}(D)
    \longrightarrow
    \lambda_{\theta,L}^{(0)}(D)
    \approx
    \lambda^\star(c_\theta^{(0)}(D)).
\]
The terminal HK price defines the $1$-tree marginals that are evaluated in the loss function~\ref{eq:final-unsup-loss}.
\[
    \eta_\theta^{(0)}(D)
    =
    c_\theta^{(0)}(D)+A\lambda_{\theta,L}^{(0)}(D),
    \qquad
    u_\theta^{(0)}(D)
    =
    \mu(\eta_\theta^{(0)}(D)).
    \label{eq:base-hk-forward}
\]

\paragraph{Bernoulli certificate and marginal-sharpening forward map.}
The certificate in Theorem~\ref{thm:certificate} bounds the residual non-tour mass after HK equilibration, but directly differentiating the full covariance certificate would require higher-order covariance derivatives. In practice, one can instead use a Bernoulli upper certificate that depends only on the recovered edge marginal $u$ at HK equilibrium:
\[
    \mathcal C_{\rm Ber}(u)
    :=
    \sum_{e\in\Et}w_eu_e(1-u_e),
    \qquad
    w_e:=\frac12\sum_{i\in\bar V}B_{ie},
\]
where \(Bx_U=d_{\bar V}(U)\). Its marginal derivative by is

\begin{equation}
    b_{\rm Ber}(u):=\nabla_u\mathcal C_{\rm Ber}(u)= w\odot(1-2u)
    \label{eq:ber_descent}
\end{equation}

Since \(\eta\) is an edge-cost parameter, a positive step in this direction
sharpens the marginal by penalizing low-probability edges and rewarding
high-probability edges. 

\begin{proposition}[Bernoulli certificate descent]
\label{prop:ber-cert-descent}
Let \(q_\eta\) be an HK-equilibrated rooted \(1\)-tree Gibbs law and let
\(u=\mu(\eta)\). Then $\mathcal C_{\rm Ber}(u)$ is a valid non-tour certificate $ \gamma(q_\eta)\le\mathcal C_{\rm Ber}(u)$.
Moreover, for any HK-equilibrated edge input \(c\), let
\[
    \eta(c)=c+A\lambda^\star(c),
    \qquad
    u(c)=\mu(\eta(c)).
\]
For $c_\alpha=c+\alpha b_{\rm Ber}(u(c))$ following  Eq.~\ref{eq:ber_descent} and $u_\alpha= \mu\!\left(c_\alpha+A\lambda^\star(c_\alpha)\right)$,
we have, for all sufficiently small \(\alpha>0\), $\mathcal C_{\rm Ber}(u_\alpha)<
\mathcal C_{\rm Ber}(u(c))$, whenever the re-equilibrated sharpening direction is nonzero.
\end{proposition}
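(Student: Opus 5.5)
The proof splits into two parts. First we show that $\mathcal C_{\rm Ber}$ dominates the covariance certificate of Theorem~\ref{thm:certificate}. Then we show that a small step along $b_{\rm Ber}$, followed by re-equilibration, decreases $\mathcal C_{\rm Ber}$ through a first-order expansion.

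\emph{Validity.} Since $q_\eta$ is HK-equilibrated, Theorem~\ref{thm:certificate} gives $\gamma(q_\eta)\le\frac12\sum_{i\in\bar V}\operatorname{Var}_{q_\eta}(d_i(U))$. The plan is to bound each degree variance by a sum of edge Bernoulli variances. Write $d_i(U)=\sum_e B_{ie}x_{U,e}$. Then
\[
\operatorname{Var}(d_i)=\sum_e B_{ie}^2u_e(1-u_e)+\sum_{e\ne f}B_{ie}B_{if}\operatorname{Cov}(x_e,x_f).
\]
The key step is to show that the off-diagonal covariance sum is nonpositive. For edges $e\ne f$ sharing the same non-root node, I would invoke the negative correlation property of edges in weighted uniform spanning trees; the transfer-current theorem gives $\operatorname{Cov}(x_e,x_f)\le 0$. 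By the factorization of Lemma~\ref{lem:tractable-rooted-map}, the rooted $1$-tree law is a product of a spanning-tree law on $\bar V$ and a root-pair law. The root-pair law is a fixed-size (two-out-of-$n-1$) conditional Poisson-type law, hence also negatively correlated. Tree edges and root edges are independent, so their covariance is zero. This yields $\operatorname{Var}(d_i)\le\sum_e B_{ie}u_e(1-u_e)$, using $B_{ie}\in\{0,1\}$. Summing over $i$ and halving gives exactly $\sum_e w_eu_e(1-u_e)=\mathcal C_{\rm Ber}(u)$. This negative-correlation step is the main obstacle; the rest is bookkeeping. If a direct citation is undesirable, I would prove it from the Kirchhoff formula $\operatorname{Cov}(x_e,x_f)=-\mathrm{Tr}(e,f)\mathrm{Tr}(f,e)\cdot(\text{weights})$ for the tree factor, and by direct computation for the two-edge root normalizer.

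\emph{Descent.} Define $F(c):=\mathcal C_{\rm Ber}(u(c))$ with $u(c)=\mu(c+A\lambda^\star(c))$. The map $\lambda^\star$ is differentiable by the implicit function theorem applied to $\nabla_\lambda\Phi_\tau=0$. Strict concavity on the relevant subspace follows from the Hessian formula~\eqref{eq:hk-hessian} when the degree covariance is nonsingular. By the chain rule,
\[
\tfrac{d}{d\alpha}F(c_\alpha)\big|_{\alpha=0}=\langle b,\,J(c)\,b\rangle,\qquad b=b_{\rm Ber}(u(c)),
\]
where $J(c)=\partial u/\partial c$ is the re-equilibrated Jacobian. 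Without re-equilibration, $\partial\mu/\partial\eta=-\frac1\tau\operatorname{Cov}_{q_\eta}(x_U)$. Differentiating the balance condition $B\mu=2$ gives $\partial\lambda^\star/\partial c=-(B\Sigma A)^{-1}B\Sigma$, where $\Sigma=\operatorname{Cov}(x_U)$. Note that $A=B^\top$, which follows from~\eqref{eq:lift-degree-identity}. Hence
\[
J=-\tfrac1\tau\bigl(\Sigma-\Sigma B^\top(B\Sigma B^\top)^{-1}B\Sigma\bigr),
\]
which is $-\frac1\tau$ times the Schur complement, i.e.\ $\Sigma^{1/2}$ composed with an orthogonal projection onto the complement of $\Sigma^{1/2}B^\top$, sandwiched. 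Therefore $J$ is symmetric negative semidefinite and $\langle b,Jb\rangle\le 0$.

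The sign convention needs care here. The Gibbs law has $\exp(-\eta/\tau)$, so increasing $c$ lowers marginals, consistent with the statement's sharpening interpretation. The hypothesis ``re-equilibrated sharpening direction nonzero'' means $Jb\neq0$. For a PSD-type form, $\langle b,Jb\rangle=0$ iff $Jb=0$, so the derivative is strictly negative. A first-order Taylor expansion of the smooth map $\alpha\mapsto F(c_\alpha)$ then gives $\mathcal C_{\rm Ber}(u_\alpha)<\mathcal C_{\rm Ber}(u(c))$ for all sufficiently small $\alpha>0$.
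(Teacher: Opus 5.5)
Your proof is correct. The validity half matches the paper's Lemma~\ref{lem:app-ber-upper-cert}: negative correlation inside the spanning-tree factor and inside the root-pair factor, independence across the two factors, $B_{ie}\in\{0,1\}$, then Theorem~\ref{thm:certificate}.

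For the descent half you take a more explicit route than the paper. The paper never writes down the re-equilibrated Jacobian. It differentiates the balance condition to get $B\Sigma_xT_c=0$ and observes $T_cv-v\in\operatorname{Im}(A)$. It then steps along the projected direction $g_{\rm Ber}=P_\perp b_{\rm Ber}$ and uses the two identities to turn $-\frac1\tau b^\top\Sigma_xT_cg_{\rm Ber}$ into $-\frac1\tau\|T_cg_{\rm Ber}\|_{\Sigma_x}^2$. You instead solve for $D\lambda^\star$ and exhibit $J=-\frac1\tau\Sigma_xT_c$ as a negative Schur complement $-\frac1\tau\Sigma_x^{1/2}(I-P)\Sigma_x^{1/2}$. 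Nonpositivity of $b^\top Jb$ then follows at once from symmetry and semidefiniteness.

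What each route buys:
\begin{itemize}
\item Your version proves the statement as literally written, stepping along $b_{\rm Ber}$. The paper's appendix proof actually steps along $g_{\rm Ber}$ and leaves that reduction implicit; the two agree because $\Sigma_xT_cA\beta=0$.
\item Your version also displays the symmetric negative semidefinite structure of the re-equilibrated Jacobian.
\item The paper's version needs only differentiability of $\lambda^\star$ at $c$ and inverts nothing.
\end{itemize}

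That last point is where your write-up needs a small repair. $B\Sigma_xB^\top=\operatorname{Cov}(d_{\bar V}(U))$ is never nonsingular, because $\sum_{i\in\bar V}d_i(U)=2(n-1)$ for every $U\in\Ur$, which puts $\mathbf 1$ in its kernel. Your condition ``when the degree covariance is nonsingular'' therefore never holds literally. Use the Moore--Penrose pseudo-inverse, or restrict $\lambda$ to a complement of $\mathbf 1$ as the paper's training appendix does. Since $\Sigma_xA\mathbf 1=0$, your formula survives with $P$ the orthogonal projector onto the range of $\Sigma_x^{1/2}B^\top$. Your criterion $b^\top Jb=0\iff Jb=0$ is then equivalent to the paper's nondegeneracy condition $\|T_cg_{\rm Ber}\|_{\Sigma_x}>0$.
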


Proposition~\ref{prop:ber-cert-descent} proves that the marginal derivative~\ref{eq:ber_descent} is a descent direction that minimizes the non-tour mass. This directly motivates a deterministic sharpening map applied after the canonical HK forward pass. Starting from
\((\Gamma_\theta^{(0)},u_\theta^{(0)})\), for \(k=0,\ldots,K-1\), set
\[
    \Gamma_\theta^{(k+1)}(D)
    :=
    \Gamma_\theta^{(k)}(D)
    +
    \alpha_k b_{\rm Ber}(u_\theta^{(k)}(D)).
\]
with $\alpha_k>0$ a fixed hyperparameter for sharpening aggressiveness, and re-equilibrate with
\[
    c_\theta^{(k+1)}(D)
    =
    D+\Gamma_\theta^{(k+1)}(D),
    \qquad
    \lambda_\theta^{(k+1)}(D)
    =
    \lambda^\star(c_\theta^{(k+1)}(D)),
\]
\[
    \eta_\theta^{(k+1)}(D)
    =
    c_\theta^{(k+1)}(D)+A\lambda_\theta^{(k+1)}(D),
    \qquad
    u_\theta^{(k+1)}(D)
    =
    \mu(\eta_\theta^{(k+1)}(D)).
\]
which then supply to~\eqref{eq:final-unsup-loss} the final marginal
\(u_\theta(D):=u_\theta^{(K)}(D)\). When \(K=0\), this reduces to the canonical HK forward map.

\section{Experiments}\label{sec:experiments}

We evaluate on 2-D symmetric Euclidean TSP instances, with node coordinates sampled uniformly from the unit square. We denote the dataset with $n$ nodes by TSP-$n$. This data-generation protocol follows the standard setting used in prior neural TSP work~\citep{fu2021generalize}, while we compute the optimal labels via Concorde~\citep{applegate2011traveling}. We evaluate C2TSP in two complementary regimes against representative neural and hybrid baselines, including DIFUSCO~\citep{sun2023difusco}, DIMES~\citep{qiu2022dimes}, Fast-T2T~\citep{li2024fast}, UTSP~\citep{min2023utsp}, and NeuroLKH~\citep{xin2021neurolkh}. We train all models at TSP-100 and apply zero-shot at every test size. Details are given in Appendix~\ref{app:training_details}.

\paragraph{Decoding.}
The trained network produces edge marginals $\mu$ from which a tour is recovered without any external solver. We compare at six decode levels: \textbf{(L1)} greedy degree-2 repair; \textbf{(L2)} L1 followed by 2-opt local search for $1$, $10$, or $100$ iterations; \textbf{(L3)} Gumbel perturbation of $\mu$, then decoded greedily and taken best-of-perturbation sample; \textbf{(L4)} and MAP decoder.

\paragraph{Connecting to LKH.}
Each learning method is used as a candidate-and-initial-tour oracle for LKH-3, with five integration levels of progressively more learned signal: \textbf{(H0)} vanilla LKH-3; \textbf{(H1)} a pre-computed initial tour from L1; \textbf{(H2)} top-$5$ candidates per node, ordered by $\mu$; \textbf{(H3)} the same candidates reordered by a $0.5{:}0.5$ rank fusion of $\mu$ and the network-induced reduced cost $C_{\rm mod}$; and \textbf{(H4)} the full configuration combining H1 with the H2 (H3 for C2TSP) candidate set.  H3 is C2TSP-only because they require a network-side non-root dual that the other methods do not produce.

\label{subsec:tsp50_ablation}
\begin{figure}[H]
    \centering
    \resizebox{\linewidth}{!}{%
        \includegraphics[height=0.5cm]{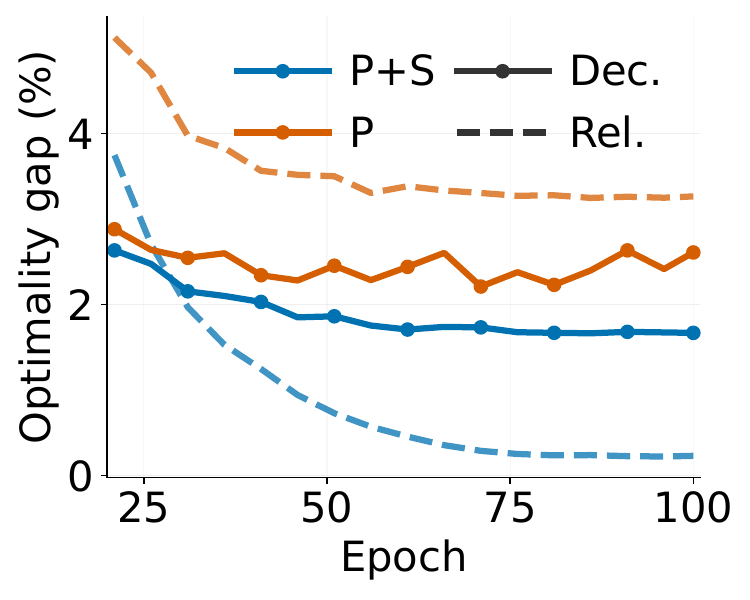}\hspace{-0.05cm}
        \includegraphics[height=0.5cm]{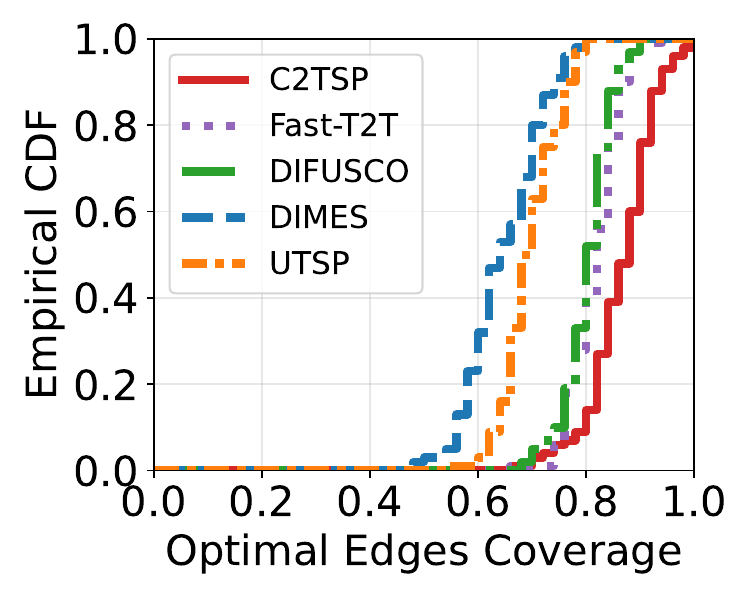}\hspace{-0.05cm}
        \includegraphics[height=0.5cm]{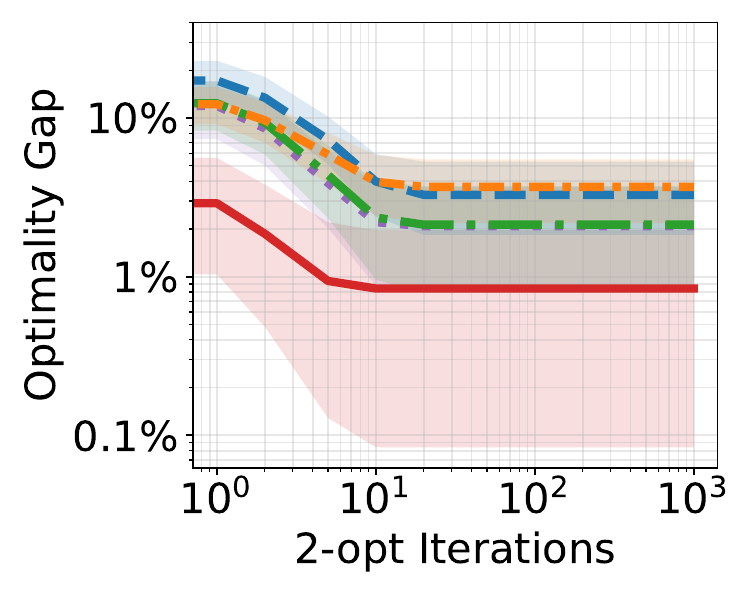}}
        \caption{Empirical analysis on TSP50. \textbf{(Left)} Training dynamics of C2TSP on TSP-100. ``P'' denotes the \emph{perturb} branch and ``P+S'' the full \emph{perturb-then-sharpen} pipeline; solid lines (``Dec.'') report the optimality gap of the decoded discrete tour, while dashed lines (``Rel.'') report the optimality gap of the continuous relaxation. \textbf{(Middle)} Empirical CDF of optimal-edge coverage when the tour is obtained by direct \texttt{argmax} over the GNN edge scores (no search). \textbf{(Right)} Optimality gap as a function of 2-opt iterations starting from the model's initial tour. Baselines construct the initial tour greedily from the heatmap; C2TSP uses MAP decoding.} \label{fig: tsp-50}
\end{figure}

\subsection{Main results}
Table~\ref{tab:decoder} reports optimality gap and per-instance wall time at every decode level. C2TSP achieves the lowest optimality gap in 20 of the 25 (size, decoder-level) cells, and is uniformly best at the heaviest local-search budget (L2$\times$100). The advantage is largest precisely where the decoder is weakest, indicating the rooted $1$-tree distribution already encodes tour-like structure before any post-processing. The remaining best-cells go to DIMES at $n\!\ge\!500$ under low-search regimes, where its near-scale-invariant heatmap edges out our marginal-based decoders. Within C2TSP, the MAP decoder (L4) is competitive with L2$\times$1 at small $n$ but degrades at $n\!\ge\!500$, indicating that the soft marginal $\mu$ carries mass beyond what the single MAP $1$-tree retains.
\begin{table}[h]
\centering
\setlength{\tabcolsep}{3.5pt}
\renewcommand{\arraystretch}{0.90}
\caption{Decoder results. Optimality gap (\%) and per-instance wall time in seconds (in parentheses).}
\label{tab:decoder}
\begin{tabular}{llrrrrr}
\toprule
Method & Level & TSP50 & TSP100 & TSP200 & TSP500 & TSP1000 \\
\midrule
\multirow{6}{*}{C2TSP}
  & L1            & \textbf{6.12} (0.14) &  \textbf{9.44} (0.15) & \textbf{12.71} (0.20) & 17.73 (0.97) & 21.98 (7.34) \\
  & L2 $\times 1$ & \textbf{3.73} (0.14) &  \textbf{7.03} (0.15) & \textbf{10.32} (0.19) & \textbf{14.97} (0.93) & 19.77 (7.19) \\
  & L2 $\times 10$& \textbf{1.44} (0.14) &  \textbf{1.97} (0.15) &  \textbf{3.25} (0.20) &  \textbf{7.84} (0.94) & 13.06 (7.19) \\
  & L2 $\times 100$ & \textbf{1.43} (0.14) & \textbf{1.90} (0.15) & \textbf{2.21} (0.20) & \textbf{2.75} (0.94) & \textbf{3.83} (7.21) \\
  & L3            & \textbf{3.09} (0.14) &  \textbf{5.11} (0.17) &  \textbf{7.97}(0.29) & \textbf{13.54} (1.61) & 20.35 (10.08) \\
  & L4            & 2.36 (0.14) &  4.83 (0.17) &  8.74 (0.29) & 15.38 (1.63) & 21.96 (11.98) \\
\midrule
\multirow{5}{*}{DIFUSCO}
  & L1            & 17.24 (0.07) & 22.60 (0.35) & 36.34 (0.96) & 82.08 (5.97) & 141.05 (24.18) \\
  & L2 $\times 1$ & 12.93 (0.07) & 19.07 (0.35) & 33.92 (0.96) & 80.56 (5.94) & 139.96 (24.06) \\
  & L2 $\times 10$&  2.71 (0.07) &  6.78 (0.35) & 22.43 (0.96) & 72.08 (5.94) & 133.36 (24.06) \\
  & L2 $\times 100$ & 2.48 (0.07) &  3.73 (0.35) &  5.74 (0.98) & 32.86 (5.96) &  94.93 (24.09) \\
  & L3            &  9.63 (0.07) & 15.41 (0.37) & 36.39 (1.05) & 97.00 (6.50) & 172.62 (26.40) \\
\midrule
\multirow{5}{*}{Fast-T2T}
  & L1            & 17.63 ($<\!.01$) & 14.31 (0.01) & 21.36 (0.02) & 41.20 (0.15) & 74.62 (0.62) \\
  & L2 $\times 1$ & 12.55 ($<\!.01$) & 11.04 (0.01) & 18.60 (0.02) & 39.23 (0.12) & 73.20 (0.49) \\
  & L2 $\times 10$&  2.60 ($<\!.01$) &  2.80 (0.01) &  9.01 (0.02) & 31.84 (0.12) & 67.50 (0.49) \\
  & L2 $\times 100$ & 2.46 ($<\!.01$) & 2.44 (0.01) & 4.39 (0.03) & 10.47 (0.14) & 42.72 (0.52) \\
  & L3            &  8.46 (0.01) &  7.77 (0.03) & 24.95 (0.11) & 74.42 (0.70) & 136.48 (2.96) \\
\midrule
\multirow{5}{*}{DIMES}
  & L1            & 13.72 ($<\!.01$) & 16.08 ($<\!.01$) & 16.18 (0.01) & \textbf{16.23} (0.04) & \textbf{15.84} (0.16) \\
  & L2 $\times 1$ & 10.05 ($<\!.01$) & 13.29 ($<\!.01$) & 14.28 ($<\!.01$) & \textbf{14.97} (0.01) & \textbf{14.96} (0.03) \\
  & L2 $\times 10$&  2.66 ($<\!.01$) &  4.28 ($<\!.01$) & 6.61 ($<\!.01$) &  9.67 (0.01) & \textbf{11.23} (0.04) \\
  & L2 $\times 100$ & 2.55 ($<\!.01$) & 3.25 ($<\!.01$) & 3.54 (0.01) & 3.73 (0.02) & 3.95 (0.07) \\
  & L3            &  7.87 (0.01) &  9.66 (0.02) & 11.58 (0.09) & 15.12 (0.57) & \textbf{19.38 }(2.41) \\
\midrule
\multirow{5}{*}{UTSP}
  & L1            & 16.56 ($<\!.01$) & 20.59 ($<\!.01$) & 24.43 ($<\!.01$) & 31.90 (0.02) & 39.21 (0.10) \\
  & L2 $\times 1$ & 12.54 ($<\!.01$) & 17.62 ($<\!.01$) & 22.45 ($<\!.01$) & 30.80 ($<\!.01$) & 38.54 ($<\!.01$) \\
  & L2 $\times 10$&  4.28 ($<\!.01$) &  8.40 ($<\!.01$) & 14.57 ($<\!.01$) & 25.35 ($<\!.01$) & 34.77 (0.01) \\
  & L2 $\times 100$ & 4.02 ($<\!.01$) & 5.66 ($<\!.01$) & 7.19 (0.01) & 9.98 (0.02) & 18.88 (0.04) \\
  & L3            & 60.95 (0.01) & 125.21 (0.02) & 220.61 (0.08) & 414.43 (0.49) & 633.80 (2.03) \\
\bottomrule
\end{tabular}
\end{table}

\begin{figure}[H]
    \centering
    \includegraphics[width=0.85\linewidth]{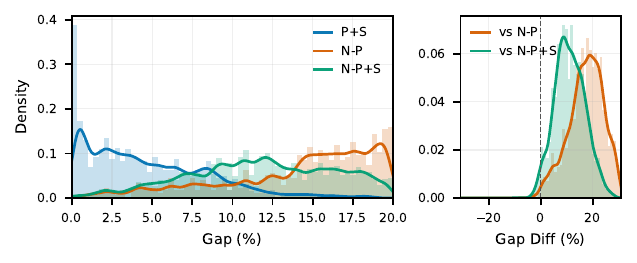}
    \caption{TSP50 ablation of edge perturbation and certificate-guided sharpening. \textbf{Left}: sampled optimality-gap distributions for the full pipeline (P+S), the variant without edge perturbation but with sharpening (N-P+S), and the variant without both perturbation and sharpening (N-P). \textbf{Right}: instance-wise gap comparisons between the full pipeline and the no-perturbation variants.}\label{fig:tsp50_ablation_stage1_two_column}
\end{figure}

Table~\ref{tab:lkh-gap} reports gap and median LKH wall time. At H1, every method stays within $\sim\!0.5\,\text{\textpertenthousand}$ of vanilla LKH-3 (H0): a learned initial tour is largely uninformative, because LKH's local search quickly escapes any mediocre start. The picture inverts at H2 and H4, where the heatmap drives LKH's candidate machinery, and DIFUSCO, DIMES, and Fast-T2T degrade by one to three orders of magnitude at $n\!\ge\!500$. C2TSP is the only learned model that survives this deeper integration, remaining within $\sim\!1\,\text{\textpertenthousand}$ of vanilla LKH at every size. Figure~\ref{fig: tsp-50} (\textbf{Middle} \& \textbf{Right}) indicates that the common thread across both main results is that the rooted $1$-tree representation preserves global connected structure by construction, so local-sweep procedures (whether 2-opt or LKH-3) inherit a globally consistent starting point and need only repair local degree defects.

\begin{table}[h]
\centering
\caption{LKH-integration results. Optimality gap ($\text{\textpertenthousand}$) and median LKH wall time per instance (s).}
\label{tab:lkh-gap}

\setlength{\tabcolsep}{3pt}
\renewcommand{\arraystretch}{1.05}
\begin{tabular}{llrrrrr}
\toprule
Level & Method & TSP100 & TSP200 & TSP500 & TSP1000 & TSP2000 \\
\midrule
H0 & LKH-3 & 0.38 (0.11) & 0.22 (0.49) & 0.36 (2.32) & 0.39 (7.49) & 0.02 (31.23) \\
\midrule
\multirow{6}{*}{H1}
   & C2TSP      & \textbf{0.18} (0.07) & 0.41 (0.29) & 0.20 (2.49) & 0.46 (5.89) & \textbf{-0.07} (26.73) \\
   & DIFUSCO   & 0.29 (0.07) & \textbf{0.34} (0.28) & 0.19 (2.54) & 0.40 (7.98) & OOM \\
   & Fast-T2T & 0.32 (0.08) & 0.64 (0.39) & 0.23 (1.99) & 0.39 (7.24) & 0.06 (27.93) \\
   & DIMES     & 0.44 (0.10) & 0.73 (0.32) & 0.19 (1.84) & \textbf{0.29} (6.39) & 0.24 (21.87) \\
   & UTSP      & 0.27 (0.11) & 0.61 (0.27) & \textbf{0.16} (2.16) & 0.41 (6.46) & 0.12 (24.73) \\
\midrule
\multirow{5}{*}{H2}
   & C2TSP      & 0.11 (0.07) & \textbf{0.13} (0.30) & \textbf{0.20} (2.82) & \textbf{0.53} (8.70) & \textbf{0.71} (37.77) \\
   & DIFUSCO   & 0.88 (0.08) & 4.91 (0.22) & 50.14 (1.87) & 130.10 (5.57) & OOM \\
   & Fast-T2T & \textbf{0.02} (0.09) & 0.76 (0.41) & 10.99 (1.99) & 36.46 (6.40) & 80.65 (20.75) \\
   & DIMES     & 3.24 (0.12) & 6.18 (0.29) & 8.27 (1.80) & 9.21 (6.77) & 6.56 (24.03) \\
   & UTSP      & 145.13 (0.09) & 211.09 (0.15) & 217.26 (1.57) & 185.35 (6.11) & 142.49 (19.77) \\
\midrule
\multirow{2}{*}{H3}
   & C2TSP      & \textbf{0.11} (0.06) & \textbf{0.04} (0.29) & \textbf{0.18} (2.49) & \textbf{0.52} (9.20) & \textbf{0.56} (37.23) \\
   & NeuroLKH  & 0.63 (0.06) & 0.53 (0.21) & 23.11 (1.01) & 38.38 (4.82) & 40.66 (27.04) \\
\midrule
\multirow{5}{*}{H4}
   & C2TSP      & 0.11 (0.07) & \textbf{0.01} (0.30) & \textbf{0.20} (2.25) & \textbf{0.58} (8.41) & \textbf{0.64} (36.87) \\
   & DIFUSCO   & 1.01 (0.08) & 2.50 (0.24) & 50.53 (1.76) & 135.08 (5.50) & OOM \\
   & Fast-T2T & \textbf{0.04} (0.09) & 0.87 (0.44) & 9.16 (2.07) & 34.85 (6.59) & 82.91 (21.11) \\
   & DIMES     & 3.57 (0.12) & 4.11 (0.32) & 4.33 (1.95) & 4.23 (7.66) & 3.55 (27.26) \\
   & UTSP      & 65.56 (0.10) & 105.35 (0.15) & 131.00 (1.57) & 118.76 (6.41) & 102.62 (21.77) \\
\bottomrule
\end{tabular}
\end{table}

\subsection{Ablation and structural analysis}
\paragraph{Component ablation.} Figure~\ref{fig:tsp50_ablation_stage1_two_column} (\textbf{Left}) shows that removing edge perturbation substantially degrades decoded performance: the average gap increases from $1.55\%$ to $12.74\%$, with many long-tail instances exceeding $20\%$. Figure~\ref{fig:tsp50_ablation_stage1_two_column} (\textbf{Right}) further show that the full pipeline improves over the no-perturbation variants on almost all test instances. Table~\ref{tab:tsp50_smooth_hk_metrics} explains this gap from a structural perspective. Without learned perturbation, the smoothed rooted $1$-tree distribution has weak edge concentration and low recovery of optimal tour edges. Sharpening alone improves these metrics, but remains far from the learned variants. In contrast, perturbation-only training already produces substantially more tour-like marginals and much better decoded tours. Figure~\ref{fig: tsp-50} (\textbf{Left}) shows a complementary training view. Without sharpening, the relaxed rooted $1$-tree objective converges quickly but the decoded tour quality remains limited, suggesting that low relaxed cost alone is not sufficient. Sharpening helps translate the learned connected structure into better decoded tours.

\begin{figure}[htbp]
  \centering
  \includegraphics[width=0.85\columnwidth]{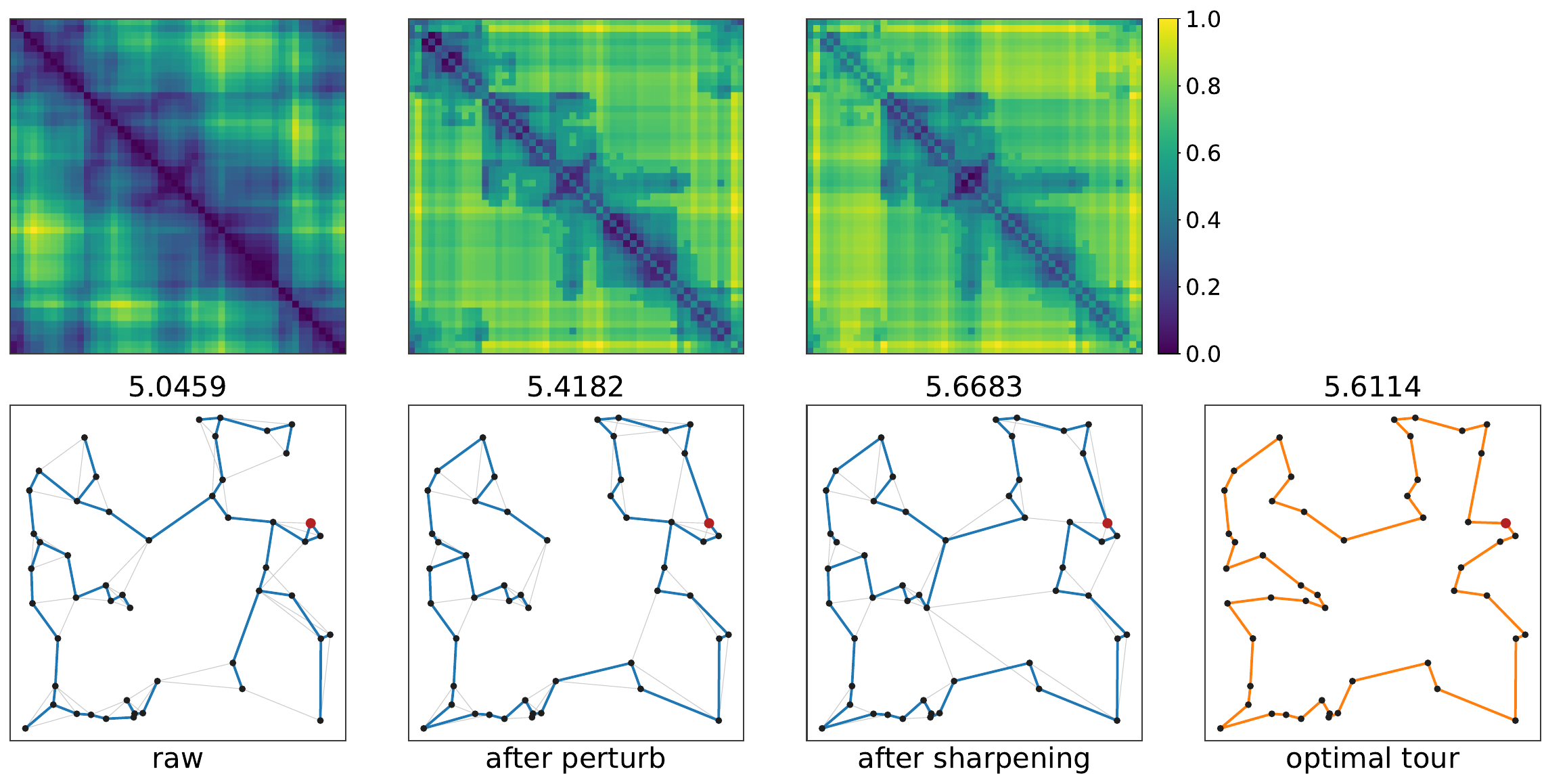}
  \caption{Refinement and decoding on a representative TSP50 instance. \textbf{Top}: Heatmaps of the raw cost matrix, the perturbed cost matrix, and the sharpened cost matrix, with nodes reordered according to the optimal tour so that optimal-tour edges appear near the super- and sub-diagonals. \textbf{Bottom}: the corresponding MAP rooted $1$-trees and the optimal tour. Titles report rooted $1$-tree or tour costs.}
  \label{fig:TSP50_structure_ins_v2}
\end{figure}

\begin{table}[htbp]
\centering
\caption{TSP50 ablation of structural and decoding metrics. Top2 Conc. measures concentration of each node's incident marginal mass on its two largest incident edges. Edge Cov. and Edge Cost Cov. report unweighted and cost-weighted recovery of Concorde-tour edges.}
\label{tab:tsp50_smooth_hk_metrics}
\begin{tabular}{lcccc}
\toprule
Method & Gap (\%) $\downarrow$ & Top2 Conc. $\uparrow$ & Edge Cov. $\uparrow$ & Edge Cost Cov. $\uparrow$ \\
\midrule
P + S & \textbf{1.55} & \textbf{0.9000} & \textbf{0.8751} & \textbf{0.8357} \\
P & 2.21 & 0.7438 & 0.8442 & 0.8006 \\
N-P & 19.25 & 0.1507 & 0.5845 & 0.5105 \\
N-P + S & 12.74 & 0.1923 & 0.6847 & 0.6046 \\
\bottomrule
\end{tabular}
\end{table}

\paragraph{Structural analysis.} Figure~\ref{fig:TSP50_structure_ins_v2} presents an example of a TSP50 instance that demonstrates the mechanisms of the proposed model across its refinement stages. Because nodes are reordered according to the optimal tour, low-cost mass near the super- and sub-diagonals indicates alignment with optimal-tour edges, whereas isolated low-cost regions far from the diagonal may induce structurally poor rooted $1$-trees. In the raw cost matrix, low-cost edges are dispersed across the edge space. Edge perturbation substantially reshapes this pattern by moving low-cost mass toward the tour-aligned diagonal band. Sharpening then makes a more localized adjustment: the heatmap before and after sharpening has a similar global pattern, but ambiguous off-diagonal low-cost regions are further suppressed. This is expected because sharpening acts as a fine-tuning step rather than a full re-learning step, which is also reflected in the MAP rooted $1$-tree, where only a small number of edges are changed. The resulting MAP rooted $1$-tree is therefore shaped toward a more tour-like structure, rather than merely assigned a lower cost.

\section{Conclusion}\label{sec:conclusion}
This work establishes connected latent support as a design principle for learning-based TSP. We introduced C2TSP, a differentiable rooted \(1\)-tree pipeline whose support is connected by construction, whose marginals are exactly computable, and whose remaining degree defects are corrected through a smoothed Held--Karp equilibration layer. The model learns residual edge perturbations directly from the original TSP cost, while certificate-guided sharpening reduces a provable upper bound on residual non-tour mass. In this way, C2TSP shifts the learned object from unconstrained edge scores to a tractable near-tour marginal with explicit structural meaning. Empirically, this representation yields strong pure-decoding performance and remains robust when used as a candidate source for LKH, where several heatmap-based baselines degrade at larger sizes. The main limitation of our study mainly follows from the same tractable surrogate, where the rooted \(1\)-trees relax exact Hamiltonian-cycle support as the design choice. C2TSP enforces connectivity in the support and restores degree balance in expectation, while the remaining degree fluctuations are controlled through equilibration and sharpening rather than eliminated exactly. This gap opens natural directions for tighter certificate-directed updates, richer tractable connected families, and scalable approximations to exact marginal computation. Overall, the results show that the differentiable latent object is a central modeling choice for learning-based TSP. By encoding connectedness before decoding, C2TSP exposes a near-tour marginal whose structural evolution can be trained, certified, and analyzed.

\clearpage
\bibliography{ref}

@inproceedings{amos2017optnet,
  title={OptNet: Differentiable Optimization as a Layer in Neural Networks},
  author={Amos, Brandon and Kolter, J. Zico},
  booktitle={Proceedings of the 34th International Conference on Machine Learning},
  pages={136--145},
  year={2017}
}

@inproceedings{agrawal2019diffcp,
  title={Differentiable Convex Optimization Layers},
  author={Agrawal, Akshay and Amos, Brandon and Barratt, Shane and Boyd, Stephen and Diamond, Steven and Kolter, J. Zico},
  booktitle={Advances in Neural Information Processing Systems},
  pages={9558--9570},
  year={2019}
}

@inproceedings{bai2019deq,
  title={Deep Equilibrium Models},
  author={Bai, Shaojie and Kolter, J. Zico and Koltun, Vladlen},
  booktitle={Advances in Neural Information Processing Systems},
  pages={688--699},
  year={2019}
}

@inproceedings{blondel2022modular,
  title={Efficient and Modular Implicit Differentiation},
  author={Blondel, Mathieu and Berthet, Quentin and Cuturi, Marco and Frostig, Roy and Hoyer, Stephan and Llinares-L{\'o}pez, Felipe and Pedregosa, Fabian and Vert, Jean-Philippe},
  booktitle={Advances in Neural Information Processing Systems},
  year={2022}
}

@inproceedings{kool2019attention,
  title={Attention, Learn to Solve Routing Problems!},
  author={Kool, Wouter and van Hoof, Herke and Welling, Max},
  booktitle={International Conference on Learning Representations},
  year={2019}
}

@inproceedings{kwon2020pomo,
  title={POMO: Policy Optimization with Multiple Optima for Reinforcement Learning},
  author={Kwon, Yeong-Dae and Choo, Jinho and Kim, Byoungjip and Yoon, Iljoo and Gwon, Youngjune and Min, Seungjai},
  booktitle={Advances in Neural Information Processing Systems},
  year={2020}
}

@inproceedings{mena2018gumbelsinkhorn,
  title={Learning Latent Permutations with Gumbel-Sinkhorn Networks},
  author={Mena, Gonzalo E. and Belanger, David and Linderman, Scott W. and Snoek, Jasper},
  booktitle={International Conference on Learning Representations},
  year={2018}
}

@inproceedings{min2023utsp,
  title={Unsupervised Learning for Solving the Travelling Salesman Problem},
  author={Min, Yimeng and Bai, Yiwei and Gomes, Carla P.},
  booktitle={Advances in Neural Information Processing Systems},
  year={2023}
}

@inproceedings{xin2021neurolkh,
  title={NeuroLKH: Combining Deep Learning Model with Lin-Kernighan-Helsgaun Heuristic for Solving the Traveling Salesman Problem},
  author={Xin, Liang and Song, Wen and Cao, Zhiguang and Zhang, Jie},
  booktitle={Advances in Neural Information Processing Systems},
  year={2021}
}

@article{held1970traveling,
  title={The traveling-salesman problem and minimum spanning trees},
  author={Held, Michael and Karp, Richard M},
  journal={Operations research},
  volume={18},
  number={6},
  pages={1138--1162},
  year={1970},
  publisher={INFORMS}
}

@article{held1971traveling,
  title={The traveling-salesman problem and minimum spanning trees: Part II},
  author={Held, Michael and Karp, Richard M},
  journal={Mathematical programming},
  volume={1},
  number={1},
  pages={6--25},
  year={1971},
  publisher={Springer}
}

@inproceedings{vinyals2015pointer,
  author    = {Vinyals, Oriol and Fortunato, Meire and Jaitly, Navdeep},
  title     = {Pointer Networks},
  booktitle = {Advances in Neural Information Processing Systems},
  volume    = {28},
  year      = {2015},
}

@article{bello2016neural,
  title={Neural combinatorial optimization with reinforcement learning},
  author={Bello, Irwan and Pham, Hieu and Le, Quoc V and Norouzi, Mohammad and Bengio, Samy},
  journal={arXiv preprint arXiv:1611.09940},
  year={2016}
}

@inproceedings{fu2021generalize,
  title={Generalize a small pre-trained model to arbitrarily large tsp instances},
  author={Fu, Zhang-Hua and Qiu, Kai-Bin and Zha, Hongyuan},
  booktitle={Proceedings of the AAAI conference on artificial intelligence},
  volume={35},
  number={8},
  pages={7474--7482},
  year={2021}
}

@article{qiu2022dimes,
  title={Dimes: A differentiable meta solver for combinatorial optimization problems},
  author={Qiu, Ruizhong and Sun, Zhiqing and Yang, Yiming},
  journal={Advances in Neural Information Processing Systems},
  volume={35},
  pages={25531--25546},
  year={2022}
}

@article{xia2024position,
  title={Position: Rethinking post-hoc search-based neural approaches for solving large-scale traveling salesman problems},
  author={Xia, Yifan and Yang, Xianliang and Liu, Zichuan and Liu, Zhihao and Song, Lei and Bian, Jiang},
  journal={arXiv preprint arXiv:2406.03503},
  year={2024}
}

@article{min2026graph,
  title={Graph Neural Networks are Heuristics},
  author={Min, Yimeng and Gomes, Carla P},
  journal={arXiv preprint arXiv:2601.13465},
  year={2026}
}

@article{paulus2020gradient,
  title={Gradient estimation with stochastic softmax tricks},
  author={Paulus, Max and Choi, Dami and Tarlow, Daniel and Krause, Andreas and Maddison, Chris J},
  journal={Advances in Neural Information Processing Systems},
  volume={33},
  pages={5691--5704},
  year={2020}
}

@article{sun2023difusco,
  title={Difusco: Graph-based diffusion solvers for combinatorial optimization},
  author={Sun, Zhiqing and Yang, Yiming},
  journal={Advances in neural information processing systems},
  volume={36},
  pages={3706--3731},
  year={2023}
}

@article{li2024fast,
  title={Fast t2t: Optimization consistency speeds up diffusion-based training-to-testing solving for combinatorial optimization},
  author={Li, Yang and Guo, Jinpei and Wang, Runzhong and Zha, Hongyuan and Yan, Junchi},
  journal={Advances in Neural Information Processing Systems},
  volume={37},
  pages={30179--30206},
  year={2024}
}

@incollection{applegate2011traveling,
  title={The traveling salesman problem: a computational study},
  author={Applegate, David L and Bixby, Robert E and Chv{\'a}tal, Va{\v{s}}ek and Cook, William J},
  booktitle={The traveling salesman problem},
  year={2011},
  publisher={Princeton university press}
}

@article{bother2022s,
  title={What's wrong with deep learning in tree search for combinatorial optimization},
  author={B{\"o}ther, Maximilian and Ki{\ss}ig, Otto and Taraz, Martin and Cohen, Sarel and Seidel, Karen and Friedrich, Tobias},
  journal={arXiv preprint arXiv:2201.10494},
  year={2022}
}

@article{parjadis2023learning,
  title={Learning Lagrangian Multipliers for the Travelling Salesman Problem},
  author={Parjadis, Augustin and Cappart, Quentin and Dilkina, Bistra and Ferber, Aaron and Rousseau, Louis-Martin},
  journal={arXiv preprint arXiv:2312.14836},
  year={2023}
}

@inproceedings{gharan2011randomized,
  title={A randomized rounding approach to the traveling salesman problem},
  author={Gharan, Shayan Oveis and Saberi, Amin and Singh, Mohit},
  booktitle={2011 IEEE 52nd Annual Symposium on Foundations of Computer Science},
  pages={550--559},
  year={2011},
  organization={IEEE}
}

@article{genova2017experimental,
  title={An experimental evaluation of the best-of-many Christofides’ algorithm for the traveling salesman problem},
  author={Genova, Kyle and Williamson, David P},
  journal={Algorithmica},
  volume={78},
  number={4},
  pages={1109--1130},
  year={2017},
  publisher={Springer}
}

\clearpage

\appendix

\section{Proofs}\label{app:proofs}

\subsection{Proof of Lemma~\ref{lem:tractable-rooted-map}}
\label{app:proof-tractable-rooted-map}
\begin{proof}
Let $r\in V$ be the fixed root and $\bar V=V\setminus\{r\}$. For non-root edges $\{i,j\}\subseteq\bar V$, define
\[
    w^\eta_{ij}
    =
    \exp\!\left(-\eta_{ij}/\tau\right),
\]
and for root edges define
\[
    a_i
    =
    \exp\!\left(-\eta_{ri}/\tau\right),
    \qquad i\in\bar V .
\]
Every rooted $1$-tree $U\in\Ur$ decomposes uniquely into a spanning tree $T$ on $\bar V$ and an unordered pair of distinct root neighbors $\{i,j\}\subseteq\bar V$. Therefore,
\begin{align}
Z_r(\eta)
&=
\sum_T
\sum_{i<j}
\exp\!\left(
-\frac{1}{\tau}
\Bigl[
\sum_{e\in T}\eta_e+\eta_{ri}+\eta_{rj}
\Bigr]
\right)
\notag\\
&=
\left(
\sum_T \prod_{e\in T} w^\eta_e
\right)
\left(
\sum_{i<j,\;i,j\in\bar V} a_i a_j
\right).
\end{align}
By the matrix-tree theorem,
\[
\sum_T \prod_{e\in T} w^\eta_e
=
\det L^\sharp(W),
\]
where $L^\sharp(W)$ is any cofactor of the weighted Laplacian on the non-root graph. Hence,
\[
Z_r(\eta)
=
\det L^\sharp(W)
\sum_{i<j,\;i,j\in\bar V} a_i a_j .
\]
This proves the partition factorization.

The same factorization gives exact edge marginals. For non-root edges, the spanning-tree marginal identity gives
\[
\mu_{ij}(\eta)
=
w^\eta_{ij} R^{\mathrm{eff}}_{ij}(W),
\qquad \{i,j\}\subseteq\bar V ,
\]
where $R^{\mathrm{eff}}_{ij}(W)$ is the effective resistance between $i$ and $j$ in the weighted non-root graph. For root edges, the marginal is the probability that $i$ belongs to the sampled unordered pair of root neighbors:
\[
\mu_{ri}(\eta)
=
\frac{a_i\sum_{j\in\bar V,\;j\neq i}a_j}
{\sum_{k<\ell,\;k,\ell\in\bar V}a_k a_\ell},
\qquad i\in\bar V .
\]
The computational cost is dominated by one Laplacian factorization on the non-root graph, so the partition function, edge marginals, and degree moments are computable exactly.
\end{proof}

\subsection{Proof of Theorem~\ref{thm:hk-equilibrium}}
\label{app:proof-hk-equilibrium}

\begin{proof}
For a fixed edge cost field $c\in\mathbb R^{|\Et|}$ and node prices
$\lambda\in\mathbb R^{n-1}$, define
\[
    E_{c,\lambda}(U)
    :=
    \langle c+A\lambda,x_U\rangle .
\]
By the node-to-edge lift identity,
\[
    \langle A\lambda,x_U\rangle
    =
    \sum_{i\in\bar V}\lambda_i d_i(U).
\]
Therefore the Lagrangian of~\eqref{eq:hk-primal}, up to the simplex
constraint $p\in\Delta(\Ur)$, is
\[
    \mathcal L(p,\lambda)
    =
    \sum_{U\in\Ur}p(U)E_{c,\lambda}(U)
    +
    \tau\sum_{U\in\Ur}p(U)\log p(U)
    -
    2\mathbf 1^\top\lambda .
\]
For fixed $\lambda$, minimizing over $p\in\Delta(\Ur)$ gives the Gibbs
distribution
\[
    p_\lambda(U)
    =
    \frac{1}{Z_r(c+A\lambda)}
    \exp\!\left(
        -\frac{1}{\tau}E_{c,\lambda}(U)
    \right)
    =
    q_{c+A\lambda}(U),
\]
and the minimized value is
\[
    -\tau\log Z_r(c+A\lambda)
    -
    2\mathbf 1^\top\lambda .
\]
Thus the Lagrange dual objective is
\[
    \Phi_\tau(c,\lambda)
    =
    -\tau\log Z_r(c+A\lambda)
    -
    2\mathbf 1^\top\lambda .
\]

We next compute its derivatives. Since
\[
    \frac{\partial E_{c,\lambda}(U)}{\partial\lambda_i}
    =
    d_i(U),
\]
differentiating the log-partition function gives
\[
    \frac{\partial\Phi_\tau(c,\lambda)}{\partial\lambda_i}
    =
    \mathbb E_{q_{c+A\lambda}}[d_i(U)]-2 .
\]
Differentiating once more,
\[
    \frac{\partial^2\Phi_\tau(c,\lambda)}
    {\partial\lambda_i\partial\lambda_j}
    =
    -\frac{1}{\tau}
    \operatorname{Cov}_{q_{c+A\lambda}}
    \bigl(d_i(U),d_j(U)\bigr).
\]
For any vector $a\in\mathbb R^{n-1}$,
\[
    a^\top\nabla^2_{\lambda\lambda}\Phi_\tau(c,\lambda)a
    =
    -\frac{1}{\tau}
    \operatorname{Var}_{q_{c+A\lambda}}
    \left(
        \sum_{i\in\bar V}a_i d_i(U)
    \right)
    \le 0 .
\]
Hence $\Phi_\tau(c,\lambda)$ is concave in $\lambda$. Therefore any
stationary maximizer $\lambda^\star(c)$ satisfies
\[
    0
    =
    \frac{\partial\Phi_\tau(c,\lambda^\star(c))}
    {\partial\lambda_i}
    =
    \mathbb E_{q_{c+A\lambda^\star(c)}}[d_i(U)]-2,
    \qquad i\in\bar V .
\]
This proves the HK degree-balance condition.
\end{proof}

\subsection{Proof of Theorem~\ref{thm:certificate}}
\label{app:proof-certificate}

\begin{proof}
For any \(U\in\Ur\), define
\[
    S(U):=\sum_{i\in\bar V}(d_i(U)-2)^2 .
\]
Every rooted $1$-tree has root degree two and \(n\) edges, hence
\[
    \sum_{i\in\bar V} d_i(U)=2(n-1),
    \qquad
    \sum_{i\in\bar V}(d_i(U)-2)=0 .
\]
If \(U\notin\Ht\), the integral deviations
\(\{d_i(U)-2\}_{i\in\bar V}\) are not all zero and sum to zero. Therefore at least one deviation is positive and at least one is negative, so
\[
    S(U)=\sum_{i\in\bar V}(d_i(U)-2)^2\ge 2 .
\]
Thus
\[
    \mathbf 1\{U\in\Nt\}
    \le
    \frac12 S(U).
\]
Taking expectation under \(q\) gives
\[
    \gamma(q)
    =
    q(\Nt)
    \le
    \frac12
    \sum_{i\in\bar V}
    \mathbb E_q[(d_i(U)-2)^2].
\]
Under the degree-balance condition \(\mathbb E_q[d_i(U)]=2\), this becomes
\[
    \gamma(q)
    \le
    \frac12
    \sum_{i\in\bar V}
    \operatorname{Var}_q(d_i(U)).
\]
For \(q^\star=q_{c+A\lambda^\star(c)}\), the Hessian identity follows directly
from~\eqref{eq:hk-hessian}:
\[
    \nabla_{\lambda\lambda}^2\Phi_\tau(c,\lambda^\star(c))
    =
    -\frac1\tau
    \operatorname{Cov}_{q^\star}(d(U),d(U)).
\]
Taking traces gives~\eqref{eq:hessian-certificate}.
\end{proof}

\subsection{Proof of Proposition~\ref{prop:oracle-edge-field}}
\label{app:proof-oracle-edge-field}

\begin{proof}
The Lagrangian of~\eqref{eq:oracle-recovery}, up to constants independent of
\(p\), is
\[
    \sum_{U\in\Ur}p(U)
    \left\langle
        D+A\bar\lambda+C^\top\nu,
        x_U
    \right\rangle
    +
    \tau\sum_{U\in\Ur}p(U)\log p(U).
\]
Stationarity in \(p\) gives
\[
    p^\star(U)
    =
    q_{D+A\bar\lambda^\star+C^\top\nu^\star}(U).
\]
Decompose the raw field as
\[
    C^\top\nu^\star=A\beta^\star+\Gamma^\star,
    \qquad
    \Gamma^\star\in\operatorname{Im}(A)^\perp,
\]
and set \(\lambda^\star:=\bar\lambda^\star+\beta^\star\). Then
\[
    D+A\bar\lambda^\star+C^\top\nu^\star
    =
    D+\Gamma^\star+A\lambda^\star,
\]
so \(p^\star=q_{\eta^\star}\) with
\[
    \eta^\star=D+\Gamma^\star+A\lambda^\star .
\]
The degree KKT conditions give
\[
    \mathbb E_{p^\star}[d_i(U)]=2,
    \qquad i\in\bar V .
\]
By~\eqref{eq:hk-gradient}, this is equivalent to
\[
    \nabla_\lambda\Phi_\tau(D+\Gamma^\star,\lambda^\star)=0.
\]
Concavity of \(\Phi_\tau\) in \(\lambda\) gives
\[
    \lambda^\star\in\argmax_\lambda\Phi_\tau(D+\Gamma^\star,\lambda).
\]
Finally,
\[
    \mu^\star
    =
    \mathbb E_{p^\star}[x_U]
    =
    \mathbb E_{q_{\eta^\star}}[x_U]
    =
    \mu(\eta^\star).
\]
\end{proof}

\subsection{Proof of Proposition~\ref{prop:ber-cert-descent}}
\label{app:proof-ber-cert-descent}

We use the following notation. Let
\(B:\mathbb R^{|\Et|}\to\mathbb R^{|\bar V|}\) be the non-root degree
operator, so that
\[
    Bx_U=d_{\bar V}(U),
    \qquad U\in\Ur .
\]
With the lift \(A\) defined in~\eqref{eq:lift}, we have \(A=B^\top\).
For an edge parameter \(\eta\), write
\[
    u(\eta):=\mu(\eta)=\mathbb E_{q_\eta}[x_U],
    \qquad
    \Sigma_x(\eta):=\operatorname{Cov}_{q_\eta}(x_U,x_U).
\]
When the parameter is clear, we write simply \(u\) and \(\Sigma_x\).

Recall
\[
    \mathcal C_{\rm Ber}(u)
    =
    \sum_{e\in\Et}w_eu_e(1-u_e),
    \qquad
    w_e:=\frac12\sum_{i\in\bar V}B_{ie}.
\]
Its marginal gradient is
\[
    b_{\rm Ber}(u)
    :=
    \nabla_u\mathcal C_{\rm Ber}(u)
    =
    w\odot(1-2u).
\]
The projected sharpening direction is
\[
    g_{\rm Ber}(u)
    :=
    P_\perp b_{\rm Ber}(u),
    \qquad
    P_\perp:=P_{\operatorname{Im}(A)^\perp}.
\]

\begin{lemma}[Bernoulli upper certificate]
\label{lem:app-ber-upper-cert}
Let \(q_\eta\) be HK-equilibrated and let \(u=\mu(\eta)\). Then
\[
    \gamma(q_\eta)
    \le
    \mathcal C_{\rm Ber}(u).
\]
\end{lemma}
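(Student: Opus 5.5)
The plan is to start from the covariance certificate of Theorem~\ref{thm:certificate} and bound each degree variance by a sum of edge Bernoulli variances, using negative correlation of the edge indicators under $q_\eta$. Since $q_\eta$ is HK-equilibrated, $\mathbb E_{q_\eta}[d_i(U)]=2$ for every $i\in\bar V$. Theorem~\ref{thm:certificate} then gives
\[
\gamma(q_\eta)\le \tfrac12\sum_{i\in\bar V}\operatorname{Var}_{q_\eta}(d_i(U)).
\]
Writing $d_i(U)=\sum_{e\ni i}x_{U,e}$, we expand
\[
\operatorname{Var}(d_i)=\sum_{e\ni i}u_e(1-u_e)+\sum_{e\neq f,\ e,f\ni i}\operatorname{Cov}(x_e,x_f),
\]
using that $x_e$ is Bernoulli with mean $u_e$. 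It therefore suffices to show $\operatorname{Cov}_{q_\eta}(x_e,x_f)\le 0$ for all distinct edges $e,f$ sharing a non-root endpoint.

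To establish this, I will use the product structure from Lemma~\ref{lem:tractable-rooted-map}. Under $q_\eta$, the spanning tree $T$ on $\bar V$ and the root-neighbour pair $\{i,j\}$ are independent, because the weight factorizes as $\prod_{e\in T}w^\eta_e\cdot a_ia_j$. This gives three cases.
\begin{itemize}
\item If $e$ is a tree edge and $f$ a root edge, then $\operatorname{Cov}(x_e,x_f)=0$.
\item If both are tree edges, $T$ is a weighted uniform spanning tree. Its edge indicators are negatively correlated: by the transfer-current theorem, $\operatorname{Cov}(x_e,x_f)=-w_ew_f\,Y(e,f)^2\le0$ (equivalently, Feder--Mihail, or strong Rayleigh-ness of the Kirchhoff polynomial).
\item If both are root edges $\{r,i\},\{r,j\}$, I will compute directly. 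Let $S_1=\sum_k a_k$ and $S_2=\sum_{k<\ell}a_ka_\ell$. Then $P(x_{ri}=x_{rj}=1)=a_ia_j/S_2$ and $u_{ri}=a_i(S_1-a_i)/S_2$, and negativity reduces to
\[
S_2\le (S_1-a_i)(S_1-a_j).
\]
Expanding, the right side minus $S_2$ contains $S_2$-type cross terms over $\bar V\setminus\{i,j\}$ together with nonnegative squares, which should yield the inequality. Alternatively, I can invoke that the elementary symmetric polynomial $e_2$ is real stable, so the law is strongly Rayleigh and hence negatively correlated.
\end{itemize}

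Combining the cases, $\operatorname{Var}(d_i)\le\sum_{e\ni i}u_e(1-u_e)$. Summing over $i\in\bar V$ and exchanging the order of summation, each edge $e$ is counted once per non-root endpoint, i.e.\ $\sum_iB_{ie}$ times. This gives
\[
\tfrac12\sum_i\operatorname{Var}(d_i)\le\sum_e\Bigl(\tfrac12\sum_iB_{ie}\Bigr)u_e(1-u_e)=\mathcal C_{\rm Ber}(u),
\]
as claimed. Note that root edges carry weight $w_e=\tfrac12$ and non-root edges carry weight $w_e=1$.

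The main obstacle is the negative-correlation step for tree edges, since that is where a genuine external result (the transfer-current or Feder--Mihail theorem) is needed. The root-pair case is an elementary but fiddly algebraic check, which I would verify by expanding $(S_1-a_i)(S_1-a_j)-S_2$ explicitly.
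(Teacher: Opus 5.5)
Your proposal is correct and follows essentially the paper's route. You apply Theorem~\ref{thm:certificate} under degree balance, expand each $\operatorname{Var}_{q_\eta}(d_i(U))$ into Bernoulli variances plus pairwise covariances, show the covariances are nonpositive via the spanning-tree/root-pair product factorization, and regroup with $w_e=\tfrac12\sum_{i\in\bar V}B_{ie}$; you justify the covariance step more explicitly than the paper's one-line appeal to negative association (independence for tree--root pairs, transfer-current or Feder--Mihail for tree pairs). Your root--root case is correct but never needed, because distinct root edges $\{r,i\},\{r,j\}$ share only the endpoint $r\notin\bar V$ and so never appear together in any $\operatorname{Var}_{q_\eta}(d_i(U))$ with $i\in\bar V$.
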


\begin{proof}
For each \(i\in\bar V\),
\[
    d_i(U)
    =
    \sum_{e\in\Et}B_{ie}x_{U,e}.
\]
The rooted \(1\)-tree Gibbs law factorizes into a weighted spanning-tree law
on \(\bar V\) and an independent weighted two-edge root-selection law. Edge
indicators in both factors are negatively associated; hence, for \(e\ne f\),
\[
    \operatorname{Cov}_{q_\eta}(x_{U,e},x_{U,f})\le 0.
\]
Therefore,
\[
\begin{aligned}
    \operatorname{Var}_{q_\eta}(d_i(U))
    &=
    \operatorname{Var}_{q_\eta}
    \left(
        \sum_{e\in\Et}B_{ie}x_{U,e}
    \right)                                                   \\
    &=
    \sum_{e\in\Et}B_{ie}^2
    \operatorname{Var}_{q_\eta}(x_{U,e})
    +
    2\sum_{e<f}B_{ie}B_{if}
    \operatorname{Cov}_{q_\eta}(x_{U,e},x_{U,f})               \\
    &\le
    \sum_{e\in\Et}B_{ie}u_e(1-u_e),
\end{aligned}
\]
because \(B_{ie}\in\{0,1\}\). Combining this with
Theorem~\ref{thm:certificate} gives
\[
\begin{aligned}
    \gamma(q_\eta)
    &\le
    \frac12
    \sum_{i\in\bar V}
    \operatorname{Var}_{q_\eta}(d_i(U))                       \\
    &\le
    \frac12
    \sum_{i\in\bar V}
    \sum_{e\in\Et}
    B_{ie}u_e(1-u_e)                                          \\
    &=
    \sum_{e\in\Et}
    \left(
        \frac12\sum_{i\in\bar V}B_{ie}
    \right)
    u_e(1-u_e)                                                \\
    &=
    \mathcal C_{\rm Ber}(u).
\end{aligned}
\]
\end{proof}

\begin{lemma}[Tangent identity of HK re-equilibration]
\label{lem:app-hk-tangent-identity}
Fix an edge input \(c\) at which the local HK map
\(c\mapsto \lambda^\star(c)\) is differentiable. Define
\[
    \eta(c):=c+A\lambda^\star(c),
    \qquad
    T_c:=D\eta(c).
\]
Then
\[
    B\Sigma_xT_c=0,
    \qquad
    T_cv-v\in\operatorname{Im}(A)
    \quad
    \text{for every }v\in\mathbb R^{|\Et|},
\]
where \(\Sigma_x=\Sigma_x(\eta(c))\).
\end{lemma}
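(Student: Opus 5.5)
The plan is to differentiate the defining equilibrium condition of the HK map and read off both claims. By Theorem~\ref{thm:hk-equilibrium}, $\lambda^\star(c)$ is characterized by $\nabla_\lambda\Phi_\tau(c,\lambda^\star(c))=0$, which by~\eqref{eq:hk-gradient} is
\[
    B\,\mu\bigl(c+A\lambda^\star(c)\bigr)=2\mathbf 1 ,
\]
using $Bx_U=d_{\bar V}(U)$, so that $\mathbb E_{q_\eta}[d_{\bar V}(U)]=B\mu(\eta)$. This identity holds for every $c$ in a neighbourhood where the local HK map is defined and differentiable. The right-hand side is constant in $c$, so differentiating both sides along any direction $v$ gives $B\,D\mu(\eta(c))\,D\eta(c)\,v=0$.

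The key ingredient is the standard exponential-family identity for the Jacobian of the marginal map. Since $q_\eta\propto\exp(-\langle\eta,x_U\rangle/\tau)$, differentiating $\mu(\eta)=\mathbb E_{q_\eta}[x_U]$ gives
\[
    D\mu(\eta)=-\tfrac1\tau\,\Sigma_x(\eta),
\]
exactly as in the computation behind~\eqref{eq:hk-hessian}. Substituting this yields $-\tfrac1\tau B\Sigma_x T_c v=0$ for all $v$, that is, $B\Sigma_xT_c=0$. This is the first claim.

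For the second claim, I would use the chain rule on $\eta(c)=c+A\lambda^\star(c)$. This gives $T_cv=v+A\,D\lambda^\star(c)\,v$, so $T_cv-v=A\,(D\lambda^\star(c)v)\in\operatorname{Im}(A)$ directly. It needs nothing beyond the differentiability assumed in the statement. If an explicit formula is wanted, one can combine the two facts: writing $T_cv=v+A\delta$ and using $A=B^\top$, the condition $B\Sigma_x(v+B^\top\delta)=0$ gives
\[
    \delta=-(B\Sigma_xB^\top)^{+}B\Sigma_xv .
\]
Here $B\Sigma_xB^\top=\operatorname{Cov}(d_{\bar V}(U))=-\tau\nabla^2_{\lambda\lambda}\Phi_\tau$ by~\eqref{eq:hk-hessian}. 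This is the implicit-function-theorem expression, but the lemma itself does not require it.

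The only delicate point is justifying the differentiation. That is, the HK map must be differentiable at $c$, which the statement assumes, and the equilibrium identity must hold on a neighbourhood of $c$, not just at $c$. If $B\Sigma_xB^\top$ were singular, $\lambda^\star$ might be non-unique, with ambiguity along $\ker(B\Sigma_xB^\top)$. I would handle this by noting that the lemma is stated for a local differentiable branch, so the identity $B\mu(\eta(c'))=2\mathbf 1$ holds along that branch for $c'$ near $c$. The smoothness of $\mu$ in $\eta$ follows from Lemma~\ref{lem:tractable-rooted-map}, since $\mu$ is a log-partition derivative of a finite exponential sum. With that in place, both identities follow by the chain rule.
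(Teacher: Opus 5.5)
Your proof is correct and is the paper's argument: you differentiate the equilibrium identity $B\mu(c+A\lambda^\star(c))=2\mathbf 1$ using $D\mu(\eta)=-\frac1\tau\Sigma_x$ to get $B\Sigma_xT_c=0$, and you apply the chain rule to $\eta(c)=c+A\lambda^\star(c)$ for the second claim. One small correction to a side remark: $B\Sigma_xB^\top$ is always singular, not just possibly so, because $\mathbf 1^\top d_{\bar V}(U)=2(n-1)$ is constant on $\Ur$ (this is why the paper restricts $\lambda$ to a nondegenerate subspace), but your local-differentiable-branch assumption already covers this, so the proof is unaffected.
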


\begin{proof}
The HK equilibrium condition is
\[
    B\mu(c+A\lambda^\star(c))-2\mathbf 1=0.
\]
The rooted \(1\)-tree Gibbs marginal satisfies
\[
    D\mu(\eta)[v]
    =
    -\frac1\tau\Sigma_xv.
\]
Differentiating the HK equilibrium condition in direction \(v\) gives
\[
    B\Sigma_xT_cv=0.
\]
Since this holds for every \(v\),
\[
    B\Sigma_xT_c=0.
\]

The second identity follows directly from
\[
    \eta(c)=c+A\lambda^\star(c).
\]
Indeed, differentiating in direction \(v\) gives
\[
    T_cv
    =
    v+A\,D\lambda^\star(c)[v],
\]
and hence
\[
    T_cv-v
    =
    A\,D\lambda^\star(c)[v]
    \in\operatorname{Im}(A).
\]
\end{proof}

\begin{lemma}[Projected Bernoulli descent identity]
\label{lem:app-projected-ber-descent}
Define the reduced certificate
\[
    \overline{\mathcal C}_{\rm Ber}(c)
    :=
    \mathcal C_{\rm Ber}
    \left(
        \mu(c+A\lambda^\star(c))
    \right).
\]
Let
\[
    u=u(c):=\mu(c+A\lambda^\star(c)),
    \qquad
    \Sigma_x=\Sigma_x(c+A\lambda^\star(c)).
\]
Then
\[
    D\overline{\mathcal C}_{\rm Ber}(c)
    [g_{\rm Ber}(u)]
    =
    -
    \frac1\tau
    \left\|
        T_cg_{\rm Ber}(u)
    \right\|_{\Sigma_x}^2
    \le 0,
\]
where
\[
    \|v\|_{\Sigma_x}^2:=v^\top\Sigma_xv .
\]
\end{lemma}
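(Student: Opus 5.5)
The plan is a chain-rule computation in which the tangent identity of Lemma~\ref{lem:app-hk-tangent-identity} is applied twice, once to remove the node-additive part of the gradient and once to symmetrize the resulting quadratic form. Fix $c$ at which $\lambda^\star$ is differentiable, as in Lemma~\ref{lem:app-hk-tangent-identity}. Write $\eta(c)=c+A\lambda^\star(c)$ and $b:=b_{\rm Ber}(u)$. For any direction $v$, the chain rule together with $D\mu(\eta)[w]=-\frac1\tau\Sigma_x w$ (used in the proof of Lemma~\ref{lem:app-hk-tangent-identity}) gives
\[
    D\overline{\mathcal C}_{\rm Ber}(c)[v]
    =
    b^\top D\mu(\eta(c))[T_cv]
    =
    -\frac1\tau\, b^\top\Sigma_x T_c v .
\]
So the task reduces to showing that, for $v=g_{\rm Ber}(u)=P_\perp b$, the bilinear expression $b^\top\Sigma_xT_cg_{\rm Ber}(u)$ equals $(T_cg_{\rm Ber}(u))^\top\Sigma_x(T_cg_{\rm Ber}(u))$.

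\emph{First use of the tangent identity.} Since $b-P_\perp b\in\operatorname{Im}(A)$, write $b=g_{\rm Ber}(u)+Aa$ for some $a\in\mathbb R^{n-1}$. Using $A=B^\top$, the cross term is $(Aa)^\top\Sigma_xT_cv=a^\top B\Sigma_xT_cv$, and this vanishes because $B\Sigma_xT_c=0$ by Lemma~\ref{lem:app-hk-tangent-identity}. Hence $b^\top\Sigma_xT_cv=g_{\rm Ber}(u)^\top\Sigma_xT_cv$ for every $v$. This step is exactly where the projection $P_\perp$ is justified: the node-additive part of $b$ is invisible to the re-equilibrated marginal.

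\emph{Second use of the tangent identity.} Take $v=g:=g_{\rm Ber}(u)$. Lemma~\ref{lem:app-hk-tangent-identity} also gives $T_cg=g+Aw$ for some $w$, so that
\[
    (T_cg)^\top\Sigma_x T_cg
    =
    g^\top\Sigma_xT_cg+w^\top B\Sigma_xT_cg
    =
    g^\top\Sigma_xT_cg ,
\]
where we used the symmetry of $\Sigma_x$ to write $(Aw)^\top\Sigma_x=w^\top B\Sigma_x$. Combining the two steps yields
\[
    D\overline{\mathcal C}_{\rm Ber}(c)[g]=-\frac1\tau\|T_cg\|_{\Sigma_x}^2 .
\]
Nonpositivity then follows because $\Sigma_x$ is a covariance matrix and hence positive semidefinite.

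The main point needing care is the bookkeeping of symmetry and transposes. The identity $B\Sigma_xT_c=0$ kills terms only of the form $(\text{something in }\operatorname{Im}(A))^\top\Sigma_xT_c(\cdot)$, so each $\operatorname{Im}(A)$ component must be moved to the left of $\Sigma_x$ before the identity applies. This is legitimate because $\Sigma_x$ is symmetric and $A=B^\top$. Beyond that, the argument only uses local differentiability of $\lambda^\star$ at $c$, which is already the standing hypothesis of Lemma~\ref{lem:app-hk-tangent-identity}.
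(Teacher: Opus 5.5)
Your proposal is correct and matches the paper's proof step for step: the chain rule gives $-\frac1\tau b^\top\Sigma_xT_cg$, and $B\Sigma_xT_c=0$ from Lemma~\ref{lem:app-hk-tangent-identity} is applied twice, first to replace $b$ by $g=P_\perp b$, and then, via $T_cg=g+A\zeta$, to replace the left factor $g$ by $T_cg$. One small correction to your commentary: symmetry of $\Sigma_x$ is never actually needed, since $(Aw)^\top\Sigma_x=w^\top B\Sigma_x$ follows from $A=B^\top$ alone, and positive semidefiniteness of $\Sigma_x$ is used only for the final sign.
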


\begin{proof}
Let
\[
    b=b_{\rm Ber}(u),
    \qquad
    g=g_{\rm Ber}(u).
\]
For any edge-input perturbation \(\delta c\),
\[
\begin{aligned}
    D\overline{\mathcal C}_{\rm Ber}(c)[\delta c]
    &=
    b^\top
    D\mu(c+A\lambda^\star(c))[T_c\delta c]              \\
    &=
    -
    \frac1\tau
    b^\top\Sigma_xT_c\delta c .
\end{aligned}
\]
Taking \(\delta c=g\), it remains to simplify
\(b^\top\Sigma_xT_cg\).

Because \(g=P_\perp b\), the removed component lies in
\(\operatorname{Im}(A)\). Thus there exists \(\beta\) such that
\[
    b-g=A\beta.
\]
Using \(A=B^\top\) and Lemma~\ref{lem:app-hk-tangent-identity},
\[
    (A\beta)^\top\Sigma_xT_cg
    =
    \beta^\top B\Sigma_xT_cg
    =
    0.
\]
Therefore,
\[
    b^\top\Sigma_xT_cg
    =
    g^\top\Sigma_xT_cg.
\]

By Lemma~\ref{lem:app-hk-tangent-identity},
\[
    T_cg-g\in\operatorname{Im}(A).
\]
Hence there exists \(\zeta\) such that
\[
    T_cg=g+A\zeta,
    \qquad
    \text{equivalently}
    \qquad
    g=T_cg-A\zeta.
\]
Again using \(A=B^\top\) and \(B\Sigma_xT_c=0\),
\[
    (A\zeta)^\top\Sigma_xT_cg
    =
    \zeta^\top B\Sigma_xT_cg
    =
    0.
\]
Thus
\[
\begin{aligned}
    g^\top\Sigma_xT_cg
    &=
    (T_cg-A\zeta)^\top\Sigma_xT_cg       \\
    &=
    (T_cg)^\top\Sigma_xT_cg              \\
    &=
    \|T_cg\|_{\Sigma_x}^2.
\end{aligned}
\]
Substituting into the directional derivative gives
\[
    D\overline{\mathcal C}_{\rm Ber}(c)[g]
    =
    -
    \frac1\tau
    \|T_cg\|_{\Sigma_x}^2
    \le 0.
\]
\end{proof}

The above three lemmas together allow us to formally close the proof for Proposition~\ref{prop:ber-cert-descent}, as summarized below. 

\begin{proof}[Proof of Proposition~\ref{prop:ber-cert-descent}]
The certificate claim follows from Lemma~\ref{lem:app-ber-upper-cert}.

For the descent claim, fix an edge input \(c\) at which the local HK map is
differentiable, and let
\[
    u(c)=\mu(c+A\lambda^\star(c)).
\]
By Lemma~\ref{lem:app-projected-ber-descent},
\[
    D\overline{\mathcal C}_{\rm Ber}(c)
    [g_{\rm Ber}(u(c))]
    =
    -
    \frac1\tau
    \left\|
        T_cg_{\rm Ber}(u(c))
    \right\|_{\Sigma_x}^2 .
\]
Therefore the derivative is strictly negative whenever the re-equilibrated
sharpening direction is nonzero, namely whenever
\[
    \left\|
        T_cg_{\rm Ber}(u(c))
    \right\|_{\Sigma_x}>0.
\]
By differentiability of
\(\overline{\mathcal C}_{\rm Ber}\), there exists \(\bar\alpha>0\) such that
for every \(0<\alpha<\bar\alpha\),
\[
    \overline{\mathcal C}_{\rm Ber}
    \left(
        c+\alpha g_{\rm Ber}(u(c))
    \right)
    <
    \overline{\mathcal C}_{\rm Ber}(c).
\]
With
\[
    c_\alpha=c+\alpha g_{\rm Ber}(u(c)),
    \qquad
    u_\alpha=\mu(c_\alpha+A\lambda^\star(c_\alpha)),
\]
this is exactly
\[
    \mathcal C_{\rm Ber}(u_\alpha)
    <
    \mathcal C_{\rm Ber}(u(c)).
\]
The proposition follows.
\end{proof}

\section{Training details}\label{app:training_details}

\subsection{Experimental environment}\label{app:experimental_env}
Experiments were conducted on a workstation equipped with an AMD Ryzen Threadripper PRO 7955WX v16 CPU and 1 NVIDIA RTX 5090, running Ubuntu \texttt{22.04} LTS. The models were implemented using Python \texttt{3.10} and PyTorch \texttt{2.7.0}, with additional libraries including torch-scatter \texttt{2.1.2} and torch-geometric \texttt{2.6.1}.
For reproducibility, a random seed of \texttt{42} was set for all random number generators. The code for this experiment is available at~\url{https://anonymous.4open.science/r/C2TSP-EF65}.

\subsection{Training procedure and implicit differentiation}
\label{app:training-implicit}

We summarize the training procedure of C2TSP and the implicit differentiation used to backpropagate through the smoothed HK equilibration layer. The node price $\lambda$ has a degenerate additive direction, so we resolve this degeneration by mapping $\lambda$ to a fixed nondegenerate subspace; for simplicity, we still write the resulting coordinate as $\lambda$.

\begin{theorem}[Implicit differentiation of the HK layer]
\label{thm:implicit-diff-hk}
Let
\[
F(\theta,\lambda)
:=
\mathbb E_{q_{D+\Gamma_\theta+A\lambda}}
[d_{\bar V}(U)]
-
2\mathbf 1 .
\]
Suppose that
\[
F(\theta,\lambda^\star(\theta))=0
\]
and that
\[
H(\theta,\lambda^\star)
:=
-\partial_\lambda F(\theta,\lambda^\star)
\]
is positive definite on the chosen nondegenerate subspace. Then $\lambda^\star(\theta)$ is locally unique and differentiable, and
\[
\frac{d\lambda^\star}{d\theta}
=
-
\bigl(\partial_\lambda F(\theta,\lambda^\star)\bigr)^{-1}
\partial_\theta F(\theta,\lambda^\star).
\]
Hence, the backward pass through the HK layer reduces to a single linear solve in the dimension of the restricted $\lambda$ subspace.
\end{theorem}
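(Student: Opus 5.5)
The plan is to apply the implicit function theorem to $F$ on the chosen nondegenerate subspace for $\lambda$, after first checking that $F$ is continuously differentiable in both arguments.

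\emph{Smoothness.} By Lemma~\ref{lem:tractable-rooted-map}, $\log Z_r(\eta)$ is a finite log-sum-exp over $\Ur$. It is therefore real-analytic in $\eta$. The expected non-root degree is
\[
\mathbb E_{q_\eta}[d_{\bar V}(U)]=B\mu(\eta)=-\tau B\nabla_\eta\log Z_r(\eta),
\]
so it is smooth in $\eta$. The map $(\theta,\lambda)\mapsto\eta=D+\Gamma_\theta+A\lambda$ is smooth whenever the network output $\Gamma_\theta$ (including the projection $P_\perp$, which is linear) is $C^1$ in $\theta$. By the chain rule, $F$ is then $C^1$ jointly in $(\theta,\lambda)$.

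\emph{Derivatives.} Using $D\mu(\eta)[v]=-\frac1\tau\Sigma_x v$ (as in Lemma~\ref{lem:app-hk-tangent-identity}) together with $A=B^\top$, we get
\[
\partial_\lambda F=-\tfrac1\tau B\Sigma_xB^\top=-\tfrac1\tau\operatorname{Cov}_{q}(d_{\bar V},d_{\bar V}),
\]
which is exactly the Hessian~\eqref{eq:hk-hessian} of Theorem~\ref{thm:hk-equilibrium}. Similarly,
\[
\partial_\theta F=-\tfrac1\tau B\Sigma_x\,\partial_\theta\Gamma_\theta .
\]

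\emph{Implicit function theorem.} Restrict $\lambda$ to the fixed nondegenerate subspace $S$, writing $\lambda=M\xi$ with $M$ of full column rank. The equation must also be read in matching coordinates, so take $\tilde F=M^\top F$ and view it as a square system in $\xi$. Its Jacobian is $-M^\top H M$. This matrix is invertible because $H$ is positive definite on $S$ by hypothesis. The classical implicit function theorem then gives a neighbourhood of $\theta$ and a unique $C^1$ map $\theta\mapsto\xi^\star(\theta)$ solving $\tilde F=0$, with
\[
d\xi^\star/d\theta=-(\partial_\xi\tilde F)^{-1}\partial_\theta\tilde F .
\]
Mapping back through $M$ yields the stated formula, with $(\partial_\lambda F)^{-1}$ interpreted as the inverse on $S$. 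Local uniqueness within $S$ also follows from concavity: $\Phi_\tau$ is strictly concave on $S$ near $\lambda^\star$, so the stationary point there is isolated.

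\emph{Backward pass.} For a loss $\ell(\eta^\star_\theta)$, the vector--Jacobian product $v^\top d\lambda^\star/d\theta$ needs only one solve, $(\partial_\lambda F)^\top w=v$. This system has dimension $\dim S\le n-1$. Afterwards we form $-w^\top\partial_\theta F$, which is obtained by ordinary autodiff through $\mu$ and $\Gamma_\theta$. This justifies the ``single linear solve'' claim.

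\emph{Main obstacle.} The delicate step is the degenerate direction. The statement's $F$ lives in $\mathbb R^{n-1}$, while $\lambda$ is restricted to $S$. One must check that $M^\top F=0$ still captures the full equation $F=0$. The identity $\sum_i d_i(U)=2(n-1)$ implies $\mathbf 1^\top F\equiv0$, so $F$ always takes values in $\mathbf 1^\perp$. I would choose $S$ with $M^\top$ injective on $\mathbf 1^\perp$ (e.g.\ $S=\mathbf 1^\perp$ itself). With that choice, the reduced square system is equivalent to $F=0$ and the implicit function theorem applies cleanly.
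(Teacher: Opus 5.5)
The paper gives no separate proof of this theorem; it is presented as a direct consequence of the implicit function theorem. Your proposal is exactly that argument, with the details the paper leaves implicit filled in correctly. In particular, your observation that $\mathbf 1^\top F\equiv 0$ (from $\sum_{i\in\bar V}d_i(U)=2(n-1)$) is what makes the reduced square system $M^\top F=0$ on $S$ equivalent to $F=0$. Since $\partial_\lambda F=-\tfrac1\tau B\Sigma_xB^\top$ always annihilates $\mathbf 1$, it also justifies reading $(\partial_\lambda F)^{-1}$ as the inverse on $S$.
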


\begin{algorithm}[t]
\caption{Training C2TSP with residual rooted $1$-tree sharpening}
\label{alg:c2tsp_train}
\begin{algorithmic}[1]
\Require Training batch $(X,D)$, root $r$, GNN $f_\theta$, perturbation weight $\omega$, temperature $\tau$, sharpening weight $\alpha$, sharpening steps $K$, learning rate $\eta$
\Ensure Updated parameters $\theta$
\State $C^{\mathrm{base}} \gets D$
\State $L \gets f_\theta(X,D)$ \Comment{raw residual edge field}
\State $\widetilde L \gets \Pi_{\perp}(L)$ \Comment{remove node-additive component}
\State $C^{(0)} \gets C^{\mathrm{base}}-\omega \widetilde L$
\State $\lambda^{(0)} \gets \lambda^\star(C^{(0)})$ \Comment{HK equilibration}
\State $\eta^{(0)} \gets C^{(0)}+A\lambda^{(0)}$
\State $\mu^{(0)} \gets \mu(\eta^{(0)})$
\For{$k=0,\ldots,K-1$}
    \State $g^{(k)} \gets \Pi_{\perp}(1-2\mu^{(k)})$ \Comment{certificate-guided sharpening}
    \State $C^{(k+1)} \gets C^{(k)}+\alpha g^{(k)}$
    \State $\lambda^{(k+1)} \gets \lambda^\star(C^{(k+1)})$ \Comment{re-equilibration}
    \State $\eta^{(k+1)} \gets C^{(k+1)}+A\lambda^{(k+1)}$
    \State $\mu^{(k+1)} \gets \mu(\eta^{(k+1)})$
\EndFor
\State $\mathcal L_{\mathrm{cost}}(\theta) \gets \langle D,\mu^{(K)}\rangle$
\State $\theta \gets \theta-\eta\nabla_\theta \mathcal L_{\mathrm{cost}}(\theta)$
\State \Return $\theta$
\end{algorithmic}
\end{algorithm}

\subsection{Structured MAP decoding}
\label{app:map_covariance_decoder}

The structured MAP decoder converts the final rooted $1$-tree distribution into feasible tours for evaluation. Given the final sharpened cost $C^{(T)}$ and marginal $\mu^{(T)}$, the decoder repeatedly extracts MAP rooted $1$-trees, repairs them into Hamiltonian tours, and returns the candidate with the lowest true metric cost. Unlike local-search-based post-processing, this decoder directly uses the connected rooted $1$-tree structure and only applies a repair step to enforce the global connectivity constraint.

For each decode draw $k=0,\ldots,K-1$, we form a perturbed cost
\[
    C^{(k)} = C^{(T)}+\sigma \Delta C^{(k)},
\]
where $\Delta C^{(0)}=0$ gives the deterministic candidate and the remaining draws use small graph-structured perturbations. The perturbation is used only to diversify the MAP rooted $1$-tree candidates. Given $C^{(k)}$, the decoder computes
\[
    \mathcal T^{(k)}
    =
    \mathrm{MAPRootedOneTree}(C^{(k)},r),
\]
which consists of a minimum spanning tree on the non-root nodes together with the two cheapest root edges. Since $\mathcal T^{(k)}$ is connected but not necessarily a tour, it is repaired into a feasible Hamiltonian cycle:
\[
    \pi^{(k)}
    =
    \mathrm{Repair}(\mathcal T^{(k)},\mu^{(T)},C^{(k)},D).
\]
The repair step use the learned marginals, the perturbed cost, and the true metric distance matrix. Each candidate is evaluated by its true metric tour cost, and the decoder returns the best one:
\[
    \widehat{\pi}
    =
    \argmin_{\pi^{(k)}} \mathrm{TourCost}(\pi^{(k)},D).
\]

\begin{algorithm}[t]
\caption{Structured MAP decoding}
\label{alg:map_cov_decode}
\begin{algorithmic}[1]
\Require Final cost $C^{(T)}$, final marginal $\mu^{(T)}$, distance matrix $D$, root $r$, decode draws $K$, perturbation scale $\sigma$
\Ensure Decoded tour $\widehat{\pi}$
\For{$k=0,\ldots,K-1$}
  \State Generate a small perturbation $\Delta C^{(k)}$ with $\Delta C^{(0)}=0$
  \State $C^{(k)} \gets C^{(T)}+\sigma\Delta C^{(k)}$
  \State $\mathcal T^{(k)} \gets \mathrm{MAPRootedOneTree}(C^{(k)},r)$
  \State $\pi^{(k)} \gets \mathrm{Repair}(\mathcal T^{(k)},\mu^{(T)},C^{(k)},D)$
  \State $c^{(k)} \gets \mathrm{TourCost}(\pi^{(k)},D)$
\EndFor
\State $\widehat{\pi}\gets \argmin_{\pi^{(k)}}c^{(k)}$
\State \Return $\widehat{\pi}$
\end{algorithmic}
\end{algorithm}

\end{document}